\documentclass{article}

  \usepackage[preprint]{neurips_2026}

\usepackage[utf8]{inputenc} 
\usepackage[T1]{fontenc}    
\usepackage{hyperref}       
\usepackage{url}            
\usepackage{booktabs}       
\usepackage{amsfonts}       
\usepackage{nicefrac}       
\usepackage{microtype}      
\usepackage{xcolor}         
\usepackage{graphicx}
\usepackage{subcaption}
\usepackage{amsmath}
\usepackage{amsthm}
\newtheorem{theorem}{Theorem}[section]
\newtheorem{lemma}[theorem]{Lemma}

\theoremstyle{remark}
\newtheorem{remark}[theorem]{Remark}
\usepackage{multirow}

\usepackage{threeparttable}
\usepackage{cleveref}
\hypersetup{
  colorlinks=true,
  citecolor=blue,
  linkcolor=blue,
  urlcolor=blue
}

\title{Scaling Interpretable Transformers with Parity Bottleneck Layers}

\author{%
  Andrew Mack\textsuperscript{*}  \\
  Principles of Intelligence \\
   \And
  Kraig Yuheng Tou \\
  Independent
  \And
  Mark Henry\\
  Independent
  \And
  Zhengxun Wu\\
  Independent
  \And
  Lauren Greenspan\\
  Principles of Intelligence \\
}

\makeatletter
\newcommand\blfootnote[1]{%
  \begingroup
  \renewcommand\thefootnote{}%
  \footnotetext{\hspace*{-1.8em}#1}%
  \endgroup
}
\makeatother
\usepackage{makecell}

\begin{document}

\maketitle
\blfootnote{\textsuperscript{*}Corresponding author, \texttt{andrew@princint.ai}.
}

\begin{abstract}
Language models are thought to exhibit the phenomenon of superposition, representing many more features than dimensions in their residual streams. Sparse autoencoders (SAEs) are designed to recover such features post-hoc, but training models that are interpretable \emph{by construction} has remained impractical, as a per-layer over-complete bottleneck is prohibitively expensive in both memory and compute. To overcome this issue, we introduce the \emph{ParityTransformer}, a GPT-2-scale architecture whose intermediate representations are efficient \emph{and} wide / sparse by design. At each layer, a Deep Parity Bottleneck (DPB) replaces a learned over-complete basis with a parameter-free algebraic dictionary, providing a deterministic incoherence guarantee and eliminating the memory requirements that have prevented per-layer interpretable bottlenecks at scale. A DPB is a hierarchically structured sparse bottleneck which efficiently enforces sparsity using a multi-level mixture-of-experts approach: a hardware-aware implementation that closes the cost gap between activation sparse and dense training to a manageable interpretability tax. Empirically, ParityTransformers perform at least as well as post-hoc SAEs on sparse probing tasks, while out-performing on measures of feature absorption, steering effectiveness, and fine-grained causal interventions. Because subsequent computation acts only on features that survive the sparse bottleneck, the ParityTransformer's features are native to the model's forwards pass by construction, addressing the question of whether SAEs probe features the model actually uses during computation. We see this as a step toward training models whose internal representations are interpretable by design rather than recovered post hoc.
\end{abstract}

\section{Introduction}
In this paper, we introduce the parity transformer, a GPT-2-scale architecture that aims to be both interpretable by design and efficient. The model implements a hierarchical mixture of experts (MoE) approach to induce sparsity in activations, using a ``parity hash function'' to selectively compute feature directions as needed on-chip, rather than loading from High Bandwidth Memory (HBM). The name reflects how feature directions are constructed: each coordinate is computed from the \emph{parity} (even-or-odd bit count) of a subset of the feature's index bits, a cheap computational primitive that, as we detail below, generates nearly orthogonal sign patterns efficiently. This design eliminates the per-layer dictionary memory storage and bandwidth requirements of traditional wide sparse bottlenecks \citep{tamkin2023codebookfeaturessparsediscrete}. It yields a sparse code that is integrated directly into the model's computations, unlike traditional sparse autoencoders (SAEs) trained in a post-hoc fashion \citep{bricken2023monosemanticity}. A sketch of the bottleneck architecture is shown in figure~\ref{fig:three_pngs}.

\begin{figure}[t]
    \centering

    \begin{subfigure}{0.48\textwidth}
        \centering
        \includegraphics[width=\linewidth]{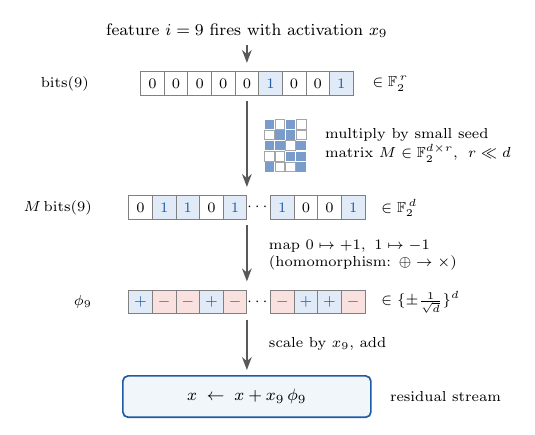}
        \caption{We ``decode" sparsely-represented activations to dense using an on-chip ``hashed features" construction.}
        \label{fig:decode}
    \end{subfigure}
    \hfill
    \begin{subfigure}{0.48\textwidth}
        \centering
        \includegraphics[width=\linewidth]{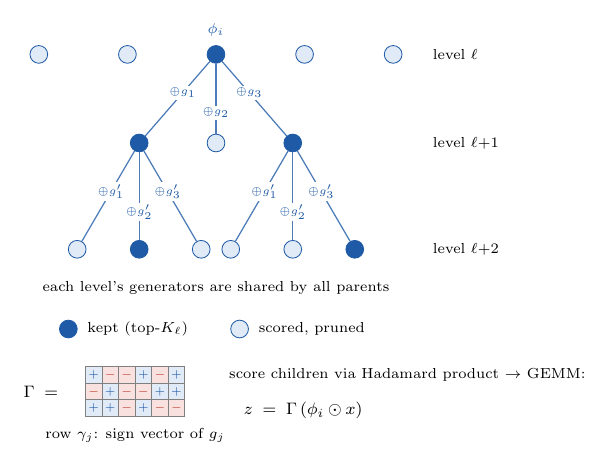}
        \caption{We encode features by performing beam search over an algebraically-structured hierarchy.}
        \label{fig:encode}
    \end{subfigure}
    \hfill

    \caption{Depiction of our efficient Deep Parity Bottleneck (DPB).} 
    \label{fig:three_pngs}
\end{figure}

Our work addresses two obstacles in AI interpretability research. The first is superposition: networks appear to represent far more features than they have residual stream directions, and resort to packing them non-orthogonally \citep{elhage2022superposition,bricken2023monosemanticity}. The dominant response has been to disentangle features post-hoc with sparse autoencoders (SAEs) trained to reconstruct frozen activations under a sparsity penalty \citep{cunningham2023saes,templeton2024scaling}. SAEs scale to millions of features and yield apparently interpretable directions, but they leave open an important question: do the recovered features correspond with those the model uses during computation, or are they representative of the training distribution or the autoencoder's inductive bias? It has been shown that different SAE methods produce features that score similarly on probing benchmarks but behave differently under causal interventions (e.g., \citep{SAEbench}), supporting the need for better definitions and metrics for probing interpretability properties like faithfulness. 

This problem is downstream of a second obstacle:  the interpretability literature uses the word ``feature'' in at least three ways. (i) \emph{Representational features} are directions returned by an SAE trained on activations, (ii) \emph{computational features} are the active components on which downstream computation causally depends, conditional on task and inputs, and (iii) \emph{Conceptual features} are ground-truth properties of the data itself (``is a noun'', ``relates to scientist''). Post-hoc interpretability tools use (i) as a proxy for (iii) by aiming to reconstruct (ii). The implicit claim is that (i)$\sim$ (iii) implies (i) $\sim$ (ii), but current evaluation techniques probe either the former (e.g., sparse probes) or the latter (e.g., causal ablations). Instead, our construction takes (ii) as a primitive  by designing a basis with desirable computational properties. How the resulting features perform compared with (i), and whether they also align with (iii), is an empirical question we address in section~\ref{sec:empirics}. 

\paragraph{Contributions.}
\begin{itemize}
    \item We introduce the Deep Parity Bottleneck (DPB), which routes features through a fixed, hierarchically structured sparse dictionary which uses algebraic pseudo-random constructions to compute ``hashed" feature directions on-chip rather than loading from memory. 
    \item We train 200M and 1.3B parameter ParityTransformers and quantify the interpretability tax that comes with this choice of architecture. We estimate that ParityTransformers are $8-14\times$ as expensive to train to a similar capability level as dense transformers at the 1.3B parameter scale.  This positions the Parity Transformer as a practical middle ground between fully weight-sparse models (estimated to be $100-1,000\times$ as expensive to train as dense by \citep{gao2025weightsparsetransformersinterpretablecircuits}) and post-hoc interpretability methods. 
    \item We empirically show that the ParityTransformer matches or outperforms a dense transformer paired with a post-hoc SAE across several interpretability metrics, including steering, fine-grained causal intervention and feature absorption.
\end{itemize}

 The remainder of this paper is organized as follows. Section~\ref{sec:methods} details the Parity Bottleneck Layer and its implementation in a ParityTransformer. We discuss our experimental design choices and present empirical results in section~\ref{sec:empirics}. 

\subsection{Related Work}\label{sec:related}

\paragraph{Sparsity, Superposition, and Circuits.} 

The field of mechanistic interpretability was built largely on the \emph{circuits hypothesis} \citep{olah2020zoom,elhage2021mathematical}, which argues that neural networks implement composable subgraphs of feature interactions that can be reverse-engineered, and the \emph{superposition hypothesis}, which treats neuron directions as a low-dimensional compression of a high-dimensional, sparsely-activating feature space. The dimensionality of this space for SOTA models is large, with reports of as many as 34M interpretable features in Claude Sonnet 3 \citep{templeton2024scaling}. In addition to this representational notion of superposition, the theoretical model of \citet{hänni2024mathematicalmodelscomputationsuperposition}shows how superposition may be actively useful during model computation. In fact, matching upper and lower bounds from analyses of computation in superposition suggest that optimally trained dense MLPs can compute with up to $d^2$  many features, up to logarithmic factors \citep{adler2026complexityneuralcomputationsuperposition}.

A separate line of work focuses on models that are sparse in weights, rather than activations. Sparse training methods include RigL's use of a boolean mask \citep{evci2021rigginglotterymakingtickets} and \citet{zhang2026brainnetworksciencemodelling}'s use of Cannistraci-Hebbian learning to iteratively prune weights during training. By identifying circuits in weight-sparse transformers, \citet{gao2025weightsparsetransformersinterpretablecircuits} add to evidence that sparsity is good for interpretability. 

\citet{tamkin2023codebookfeaturessparsediscrete} enforce activation sparsity by inserting a flat, TopK lookup into an overcomplete `codebook' at each layer. They report interpretable features and steering on small models (410M), but do not address the inefficiencies incurred by inserting wide codebook layers into an otherwise narrow transformer. Though similar to this architecture, our work targets this efficiency gap, imposes a hierarchical constraint on intermediate sparse codes, and provides points of comparison with dense transformers and post-hoc SAEs. 

\paragraph{Fine-Grained Mixture-of-Experts (MoE)}
To reduce computational costs, we employ a MoE strategy. Related work on fine-grained mixtures of experts demonstrates scaling laws in expert granularity and finds that the FLOP-efficiency advantage of MoE over dense models continues to widen with scale \citep{krajewski2024scalinglawsfinegrainedmixture}. This supports the view that only a small fraction of the total feed-forward parameters need be active for any given token. PEER \citep{he2024mixturemillionexperts} applies an efficent routing mechanism to utilize an enormous expert pool. Neither work addresses interpretability or a hardware-aware implementation.

\section{Description of Deep Parity Bottleneck}\label{sec:methods}
Guided by existing work (Section~\ref{sec:related}), we prioritize the following design principles. 

\paragraph{Sparsity in a fixed basis.} Interpretability gains from enforcing sparsity are numerous in the literature \citep{tamkin2023codebookfeaturessparsediscrete,gao2025weightsparsetransformersinterpretablecircuits,kosowski2025dragonhatchlingmissinglink}, but learning sparse, over-complete bases during training quickly becomes unwieldy. To address this challenge, we rely on the following hypothesis: \textbf{\emph{any} approximately orthogonal over-complete basis, when used as a sparse bottleneck, will induce interpretable features.} We can thus \emph{design} the basis to be convenient to compute with on GPU.  

\paragraph{Multi-level MoE with ``hashed feature directions.''}
A natural way to reduce the cost of scoring $m \gg d$ candidates is conditional computation: organize features into a hierarchy and only score the children of active parents, as in MoE architectures \citep{shazeer2017outrageously}, an approach which has been applied successfully to SAEs \citep{mudide2025efficientdictionarylearningswitch}. However, standard MoE is a poor fit for our setting. Each expert carries learned weight matrices that reside in HBM. Loading these weights to the compute units is a significant bottleneck, so an expert is only cost-effective when enough batch elements are routed to it to amortize this single expensive load across many FLOPs. For interpretability we expect a long tail of rare features that may fire only once per batch---precisely the regime where amortization fails and per-expert memory access dominates wall-clock time. This makes deep hierarchies impractical under the standard paradigm: adding levels creates exponentially rarer leaf experts, each of which must be loaded from HBM.
 
Our key technical idea is to eliminate this memory bottleneck by replacing learned feature directions with \emph{computed} ones. Rather than storing an $m \times d$ dictionary matrix and loading a row each time a feature is needed, we derive each feature's direction deterministically from its integer index using a lightweight hash function. The hash maps an index to a $d$-dimensional sign pattern ($\pm 1/\sqrt{d}$ entries) via a small seed matrix that fits entirely in GPU registers, the smallest but fastest tier of the memory hierarchy. Different indices produce nearly orthogonal directions by construction, minimizing interference between features. Because materializing a feature direction requires only a few register-level bitwise operations, the per-feature cost is dominated by the matrix multiply used to score it. This is the same arithmetic that would be required regardless of how the direction was obtained. The memory load that makes rare experts prohibitive in standard MoE is eliminated entirely, making multi-level hierarchies with potentially millions of features practical.

We now describe the architecture of the Deep Parity Bottleneck (DPB). Like a learned SAE, it has two components: an \emph{encoder} that maps a dense residual-stream vector to a sparse code over $m \gg d$ features, and a \emph{decoder} that maps the sparse code back to $\mathbb{R}^d$. We describe the decoder first, as it defines the feature basis that the encoder must select from.
 
\subsection{Parity Decoder}
 
Our decoder converts a sparse set of active features $S$ with coefficients $\{a_i\}_{i \in S}$ to a dense vector as $\hat{x} = \sum_{i \in S} a_i\, \phi_i$. The challenge is to obtain each $\phi_i \in \mathbb{R}^d$ without storing or loading an $m \times d$ dictionary matrix from HBM. Our basic primitive is a family of hashed sign vectors, one for every integer index, computed on-chip:
\begin{equation}\tilde\phi_i=\frac{1}{\sqrt{d}}\mathcal R(M\textrm{bits}(i)),\end{equation}
where the elements of $M$ and $\textrm{bits}(i)$ belong to the finite field $\mathbb F_2$, and $\mathcal R$ maps $\mathbb F_2$ to the reals (sending $0\rightarrow1$ and $1\rightarrow-1$). The matrix $M\in\mathbb F_2^{d\times r}$ has rows indexed by dense coordinates $k \in \{1,\ldots,d\}$ and columns indexed by bit positions $p \in \{1,\ldots,r\}$, with $r=\log_2(m)$; together, $M$ and $\mathcal R$ turn an $r$-bit index into a sign pattern over $d$ dense coordinates. Each coordinate of $\tilde\phi_i$ is thus determined by the parity (even-or-odd count) of a selected subset of the bits in the binary representation of $i$, mapped to $\pm 1/\sqrt{d}$. This is the origin of the name ``Parity Bottleneck.''
 
The dictionary is built from this family with one modification: the first $d$ decoder directions are overridden to the standard basis,
\begin{equation}\phi_i = e_i \;\;(i\leq d), \qquad \phi_i = \tilde\phi_i \;\;(i>d),\end{equation}
which makes the first stage of the hierarchical encoder described in the next section essentially free, as scoring a basis-aligned feature is a coordinate read rather than a dot product. The override cannot degrade coherence: every entry of every hashed vector is $\pm1/\sqrt{d}$, so a basis vector has inner product exactly $1/\sqrt{d}$ in magnitude with every hashed vector, and basis vectors are mutually orthogonal.
 
The seed matrix $M$ is chosen so that distinct indices hash to nearly orthogonal directions. To see how this might be possible, consider drawing elements of $M$ uniformly at random from $\mathbb F_2$. Then note that
\begin{equation}\langle \tilde\phi_i, \tilde\phi_j\rangle = \langle \mathbf{1}, \tilde\phi_i\odot \tilde\phi_j \rangle.\end{equation}
 
If the signs of $\tilde\phi_i\odot\tilde\phi_j$ are uniformly random, the dot product will concentrate around $0\pm\frac{1}{\sqrt{d}}$. Second, see that XOR ($\oplus$) over $\mathbb F_2$ is equivalent to a Hadamard product ($\odot$) over $\mathbb R$, so that
\begin{equation}\mathcal R(M\textrm{bits}(i)) \odot \mathcal R(M\textrm{bits}(j)) = \mathcal R(M(\textrm{bits}(i)\oplus\textrm{bits}(j))).\end{equation}
 
Finally, note that the $k$-th entry of $\mathcal R(M(\textrm{bits}(i)\oplus\textrm{bits}(j)))$ is determined by the parity of row $k$ in $M$ restricted to the bit positions $p$ where $\textrm{bits}(i)$ and $\textrm{bits}(j)$ differ. For distinct $i \neq j$, at least one such $p$ exists, and since the rows of $M$ are uniform in $\mathbb F_2$, this parity is uniformly $\pm1$, independently, across rows.
 
In practice, in place of random $M$ we use a coding-theoretic construction suggested to us by Claude-Opus-4.6. This construction upgrades the probabilistic argument to a deterministic worst-case guarantee: every pair of distinct features has inner product at most $2/\sqrt{d}+1/d$ in magnitude (Theorem~\ref{thm:main-coherence}, Appendix~\ref{app:incoherence}). We provide details about the specific construction we use in Appendix~\ref{app:incoherence}, together with a broader explanation of the connection between dictionary design and coding theory.

\subsection{ParityEncoder}
With the hashing protocol for a single feature $\phi_i$ in hand, we now constrain their co-occurrence to follow a fixed hierarchy: rather than scoring all $m$ candidates at cost $O(md)$, we search top-down through $L$ levels, scoring only the children of active parents at each level, in principle reducing the encoder cost from linear to logarithmic in $m$.

We partition the $m$ features into hierarchical levels, with $\ell = 0$ indexed by $0,...,2^{r_0}-1$, $\ell = 1$ indexed by $2^{r_0},...,2^{r_1}-1$, and so on. The parent-child structure on top of this is fixed by a directed acyclic graph (DAG): for each level transition from $\ell-1$ to $\ell$, pick $\Delta_\ell$ generators $g_j \in \mathbb F_2^{r_{\ell+1}}$, sampling at random until each has at least one non-zero bit after position $r_{\ell-1}$, ensuring that the child is in a different level from its parent. The $j$-th child of parent $i$ has bit pattern $N_j(i) = \textrm{bits}(i) \oplus g_j$; in hash space, its direction factorizes as
\begin{equation}\tilde\phi_{N_j(i)}=\frac{1}{\sqrt d}\mathcal R(M(\textrm{bits}(i)\oplus g_j)) = \tilde\phi_i \odot \mathcal R(Mg_j).\end{equation}

All features of $\ell = 0$ are scored, with subsequent levels handled inductively. During the forward pass, each child feature per active parent at level $\ell$ is scored, and the top $K_\ell$ become the active features at level $\ell+1$ (which serve as parents for level $\ell+2$, and so on so forth).

\paragraph{Scoring children with a single GEMM}

To score children for each active parent, we take the absolute value of the dot product of the input $x$ with the hash vectors for that parent's children. Defining $\gamma_j \equiv \mathcal R(M g_j)$ to be the fixed, dense patterns found by applying the parity hash to each generator, these are
\begin{equation}|\langle\tilde\phi_{N_j(i)}, x\rangle| = |\langle \tilde\phi_i\odot\gamma_j, x\rangle| = |\langle \gamma_j, \tilde\phi_i\odot x\rangle|. \end{equation}

Stacking the $\gamma_j, \ldots, \gamma_{\Delta_\ell}$ yields a fixed matrix $\Gamma_\ell \in \mathbb R^{\Delta_\ell \times d}$ that is \emph{universal} across active parents for a given level. All child scores for parent $i$ then reduce to a general matrix multiply (GEMM), the dense, linear algebra operation modern GPUs are best-suited for. 

\paragraph{Score Standardization}
Standard MoE architectures typically employ some sort of load-balancing for the following two reasons: (i) it ensures that experts are uniformly utilized so the cost of loading their weights from memory amortizes (ii) it prevents ``dead experts", which are never selected, from wasting representational capacity. The first point does not concern us; for interpretability, it is natural to assume a power-law distribution with a long tail of very rare features. However, we do need to consider the second point, which is closely related to the ``dead features" problem in traditional SAEs. In initial experiments, without load-balancing, as many as 90\% of features would die over the course of training.

To address this issue without forcing uniform firing frequencies, we standardize each feature's score using an exponential moving average (EMA) of its mean and standard deviation (conditional on being considered as a candidate). For level $\ell\geq1$ features, these moments are computed on sub-samples of 64 tokens per batch to keep this step computationally cheap. Standardized scores are used for feature selection and then passed to the parity decoder (using absolute value for selection but signed scores for decoding). A similar strategy was used successfully by PEER \citep{he2024mixturemillionexperts} to scale to millions of experts, and is conceptually related to power norm \citep{shen2020powernormrethinkingbatchnormalization}. In initial experiments, this eliminated dead features. 

\paragraph{Auxillary losses}
In addition to the cross-entropy objective, we optionally utilize two auxiliary losses aimed to induce selective and discriminative firing of features. We add a variance loss on per-feature scores given by \begin{equation}\mathcal{L}_{\text{var}} = -\frac{1}{|\mathcal{F}|}\sum_{i \in \mathcal{F}} \mathrm{Var}(\{a_i^{(t)} : i \in S^{(t)}\}),\end{equation} where $\mathcal{F} = \{i : \sum_t \mathbf{1}[i \in S^{(t)}] \geq n_{\min}\}$ is the set of features active on at least $n_{\min}$ tokens in a batch. 
This rewards high-variance rare features while keeping low-variance polysemantic features from drowning out the rest of the signal, and improved sparse-probing scores in pilot experiments. Additionally, we found that adding a mean-squared error reconstruction loss improves downstream auto-interpretability scores.

\paragraph{Normalized Reconstruction} We normalize the outputs of the ParityDecoder to match the norm of the inputs to the bottleneck. We found this improved training stability.

\paragraph{Unfolded Features} Features are selected by the ParityEncoder via AbsTopK at every level. We find that most of the time, positive and negative activations for the same feature index correspond to semantically unrelated concepts. Thus for downstream interpretability tasks, we unfold each feature index into two features — one for positive and one for negative activations — effectively doubling the dictionary size while ensuring that each entry corresponds to a single semantic concept.

\section{Empirical Results}\label{sec:empirics}
\subsection{Capabilities of Architectural Variants}\label{sec:variants}
We train various ParityTransformers with Deep Parity Bottlenecks inserted before the input to every MLP\footnote{We believe that ``implicit emulation of a very wide, sparsely-activating, sparsely-connected computational graph'' is the right ``ansatz'' for the MLP, as evidenced by the existence of fine-grained MoE scaling laws for transformer MLPs \citep{he2024mixturemillionexperts}. In contrast, the right ansatz / decomposition for attention is less well-understood, although there has been very recent progress along these lines \citep{vpd}. Indeed, in initial experiments we found that applying DPB before attention could achieve similar validation loss, but led to qualitatively worse interpretability, so we chose to focus our efforts on the DPB MLP variant.}, as well as dense GPT baselines trained to within $\pm1\%$ of parity transformer validation loss. All ParityTransformer models were trained on 20B tokens sampled from FineWeb-Edu \citep{penedo2024finewebdatasetsdecantingweb}.

Detailed configurations for the parity bottlenecks and dense baselines are given in Appendix~\ref{app:DPB configs}. Both dense baselines and ParityTransformers use a common GPT2-style backbone with tied embedding / unembedding layers and learned positional embeddings. Similar to \citep{gao2025weightsparsetransformersinterpretablecircuits}, we replace LayerNorm \citep{ba2016layernormalization} with RMSNorm \citep{zhang2019rootmeansquarelayer} to privilege the origin. We optimize MLP / attention matrices using Muon \citep{jordan2024muon} and embedding matrices using Adam \citep{kingma2015adam}. 

Performance evaluations for parity transformers and dense baselines are reported in Table~\ref{tab:20b-eval-suite}. We evaluate models on standard capabilities metrics (Pile perplexity \citep{gao2020pile800gbdatasetdiverse}, HellaSwag \citep{zellers2019hellaswagmachinereallyfinish} and LAMBADA accuracy \citep{paperno2016lambadadatasetwordprediction}). ParityTransformers match or outperform the dense baseline on all metrics, establishing that the parity bottleneck imposes no obvious ceiling in capabilities.


\begin{table}
\caption{Evaluation suite for the 20B-token ParityTransformer alongside
  validation-loss-matched GPT baselines. Bold marks the best value within
  each scale group.}
    \label{tab:20b-eval-suite}
  \centering
  \small
  \setlength{\tabcolsep}{4pt}
  \begin{tabular}{lrrrrrr}
    \toprule
    & \multicolumn{3}{c}{Large (1.3B, 24 layers, $d{=}2048$)} & \multicolumn{3}{c}{Small (203M, 12 layers, $d{=}1024$)} \\
    \cmidrule(lr){2-4} \cmidrule(l){5-7}
    & PT-Large-2L & PT-Large-2L-Aux & GPT-Large & PT-Small-3L & PT-Small-2L & GPT-Small \\
    \midrule
    \multicolumn{7}{l}{\emph{Architecture and budget}} \\
    Levels          & 2              & 2              & ---     & 3                & 2              & ---     \\
    Training tokens    & 20B            & 20B            & 3.0B    & 20B              & 20B            & 2.13B   \\
    \midrule
    \multicolumn{7}{l}{\emph{Language modelling}} \\
    Val loss (CE)   & \textbf{2.808} & 2.842          & 2.828   & \textbf{3.133}   & 3.158          & 3.135   \\
    Pile PPL        & \textbf{17.65} & 18.56          & 18.23   & \textbf{25.89}   & 24.57          & 26.43   \\
    HellaSwag acc   & \textbf{0.385} & 0.376          & 0.370   & \textbf{0.360}   & 0.350          & 0.346   \\
    LAMBADA acc     & \textbf{0.311} & 0.264          & 0.293   & \textbf{0.233}   & 0.219          & 0.229   \\
    \bottomrule
  \end{tabular}
\end{table}


\subsubsection{Interpretability tax}\label{sec:tax}

\paragraph{Token-budget overhead}

ParityTransformers are less data efficient than equivalent dense baselines, 
requiring 5--10$\times$ more tokens to reach equivalent cross-entropy loss.
At identical Large configuration, a dense baseline trained for 3.0B
tokens landed at validation loss 2.828, within 1\% of PT-Large-2L's 2.808
final validation loss. The token-budget overhead at this scale is therefore
$6.7\times$ (Table~\ref{tab:pt-data-efficiency}).


\begin{table}[h]
  \caption{ParityTransformer token-budget overhead. Training tokens
  required to reach approximately equivalent cross-entropy loss, GPT
  baseline vs.\ ParityTransformer.}
  \label{tab:pt-data-efficiency}
  \centering
  \begin{tabular}{lcrrc}
    \toprule
    Scale & Val loss (PT / GPT) & GPT-baseline tokens & PT tokens & Ratio \\
    \midrule
    Small (12 layers, 205M) & 3.158 / 3.135 & 2.13\,B & 20.0\,B & 9.4$\times$ \\
    Large (24 layers, 1.3B) & 2.808 / 2.828 & 3.0\,B  & 20.0\,B & 6.7$\times$ \\
    \bottomrule
  \end{tabular}
\end{table}

\paragraph{Throughput overhead}

ParityTransformers use between 1.2$\times$ and 2.1$\times$ more wall-clock per training token than the matched-architecture dense baselines as shown in Table~\ref{tab:throughput-comparison}. They are far more efficient than inserting flat TopK bottlenecks at each layer.

\begin{table}
  \caption{Training-throughput overhead vs the dense baseline, at matched
  architecture and identical hardware (4\,$\times$ NVIDIA H200 SXM).}
  \label{tab:throughput-comparison}
  \centering
  \begin{threeparttable}
  \begin{tabular}{lrrrr}
    \toprule
    & \multicolumn{3}{c}{Throughput, tok/s (overhead vs dense baseline)} & \\
    \cmidrule(lr){2-4}
    \shortstack{Model name \\ (parameter count)} & Dense baseline & PT & TopK SAE & \shortstack{TopK\\total params} \\
    \midrule
    Small-2L (205M) & 1{,}410k & 958k (1.47$\times$) & 305k (4.62$\times$)  &  1.0B \\
    Small-3L (205M) & 1{,}410k & 684k (2.06$\times$) &  86k (16.4$\times$)  &  3.4B \\
    Large-2L (1.3B) &    255k  & 209k (1.22$\times$) &  81k (3.15$\times$)  &  4.5B \\
    Large-3L (1.3B) &    255k  & 165k (1.55$\times$) & OOM\tnote{a} & 14.2B \\
    \bottomrule
  \end{tabular}

  \begin{tablenotes}
    \footnotesize
    \item[a] As a default, we train all models using data parallelism, replicating the model across GPUs. For the Large-3L TopK SAE baseline, this runs into memory constraints. We did not explore alternative parallelism strategies.
  \end{tablenotes}
  \end{threeparttable}
\end{table}

\subsection{Fine-grained causal intervention}
The central promise of interpretability is to recover the circuits that training discovers, rather than arbitrary decompositions that happen to describe the data. For post-hoc SAEs, this is not guaranteed: SAEs trained on \emph{randomly initialized} transformers produce auto-interpretability scores approaching those obtained from trained models \citep{heap2025automated}, and random baselines that replace SAE feature directions or activation patterns with noise match fully-trained SAEs on auto-interpretability, sparse probing, and causal editing \citep{korznikov2025sanity}. These results suggest that standard evaluation metrics may reflect statistical structure in the data or architectural inductive biases, rather than computations the model has learned. DPB features sidestep this concern by design: because the DPB is the sole input to the MLP at each layer, any feature the MLP acts on must appear in the sparse code. There is no hidden channel through which unrepresented structure can influence MLP computation.

To test whether this architectural guarantee translates into stronger causal control in practice, we adopt the circuit-level intervention protocol of \citet{makelov2024towards}, which performs greedy feature edits to flip model completions on established circuits: indirect object identification (IOI) \citep{wang2022interpretability}, gendered pronouns \citep{vig2020investigating}, and greater-than \citep{hanna2023does}. Since our bottleneck is applied at the MLP, we restrict interventions to the most influential MLP layer in each circuit. As shown in Figure~\ref{fig:fine grained controllability}, DPB features require fewer edits to flip completions than post-hoc SAEs on all three circuits. Full methodological details are in Appendix~\ref{app:fine grained CI}.

We note that IOI is known to be a very attention-centric circuit \citep{wang2022interpretability}, while for reasons explained above our DPB is applied only to the MLP. This perhaps explains why the differences between DPB and post-hoc SAEs are less stark for IOI than the other two circuits.
 
\begin{figure}[h]
    \centering
    \includegraphics[width=.8\linewidth]{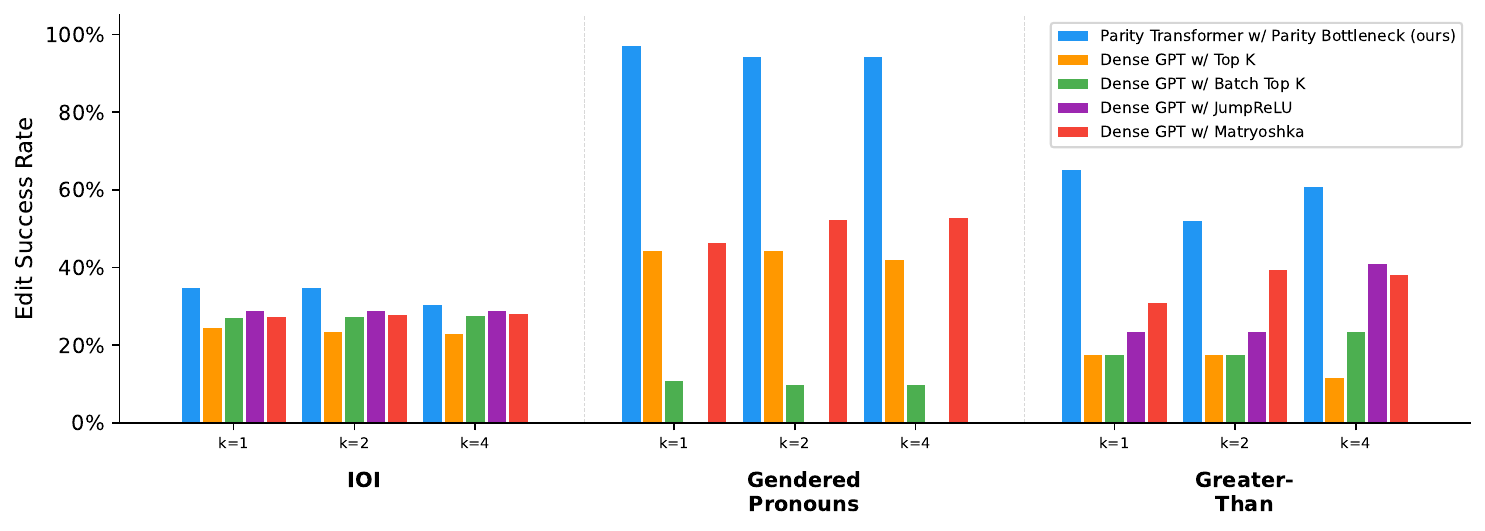}
    \caption{Edit success rate (fraction of completions flipped to the target) for varying number of features edited (k) across three established circuits in the mechanistic interpretability literature. Our method (blue) consistently outperforms post-hoc SAEs}
    \label{fig:fine grained controllability}
\end{figure}

\subsection{Steering Interventions}\label{sec:steering}

Activation steering \citep{turner2023steering} intervenes on model internals to influence free-form generation, with SAE features serving as one source of steering vectors \citep{arad2025saes, fang2026controllable}. A persistent limitation is fluency degradation at higher steering strengths, possibly because the intervention pushes activations off the manifold of natural representations \citep{luo2026learning}. The DPB provides a natural mechanism to counteract this: because subsequent layers apply top-$K$ selection in the sparse code, off-manifold components introduced by steering can be discarded rather than amplified.
 
Following the protocol of \citet{luo2026learning}, we steer at varying strengths and plot the Pareto frontier of concept injection vs.\ fluency, as judged by an LLM (Figure~\ref{fig:steering}a). The ParityTransformer achieves a higher peak concept score ($\sim$0.23 vs.\ $\sim$0.18 for the dense baseline) while exhibiting markedly shallower fluency degradation as steering strength increases. We provide representative completions in Figure~\ref{fig:steering}b so that readers can validate the fluency / concept strength for themselves. Full methodology and LLM judge prompts are deferred to Appendix~\ref{app:steering}.
    
\begin{figure}[h]
  \centering
  \begin{minipage}{0.48\textwidth}       
      \centering   
      \includegraphics[width=\textwidth]{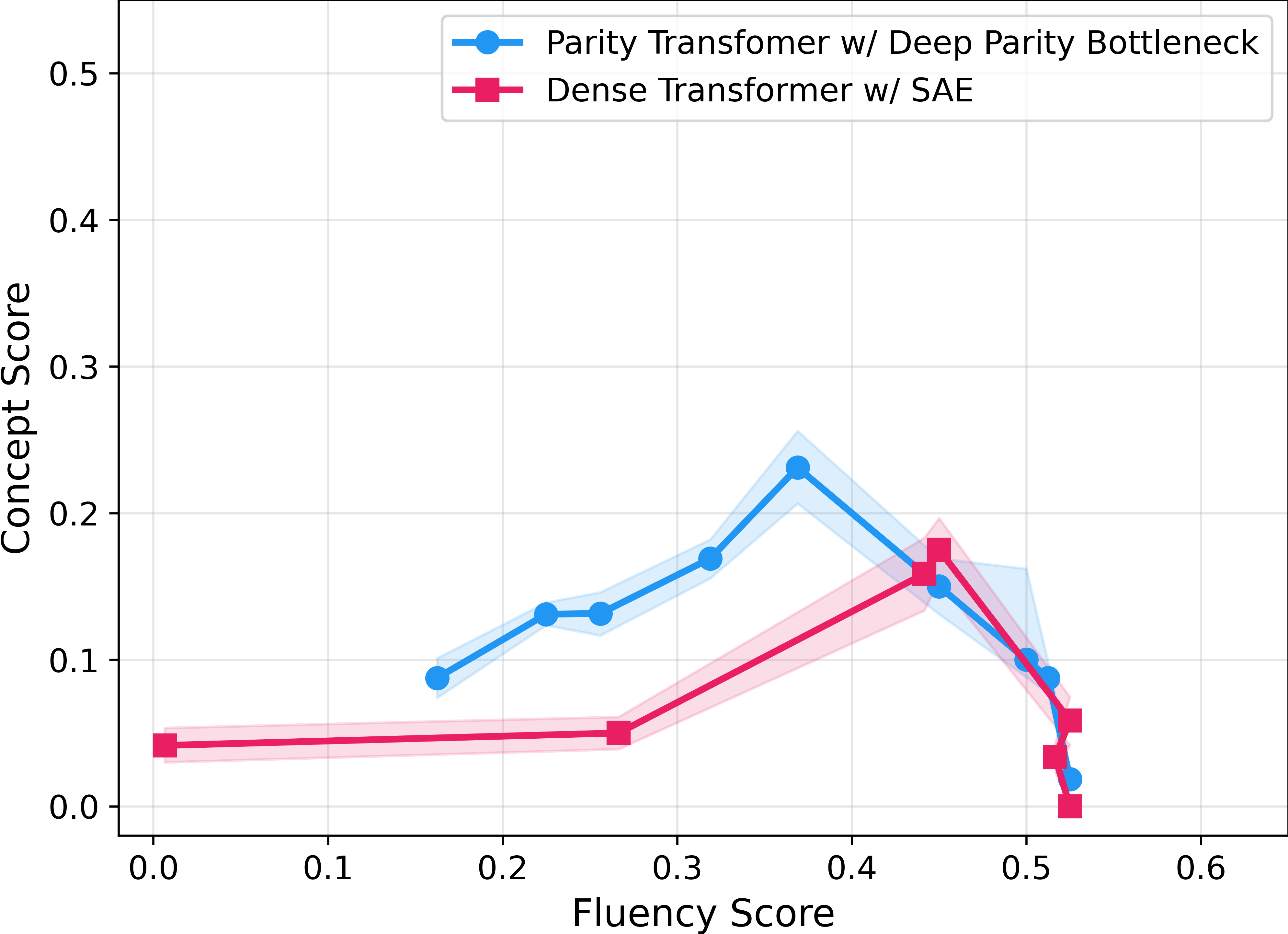}
      \captionof{figure}{Steering pareto frontier}
      \label{fig:steering}
  \end{minipage}
  \hfill
  \begin{minipage}{0.48\textwidth}
      \centering
      \captionof{table}{DPB vs. post hoc SAEs on SAEBench at layer 12; $\uparrow$ = higher is better, $\downarrow$ = lower is better.}
      \label{tab:saebench}
      \small  
      \begin{tabular}{lccc}
      \toprule
      Metric & PB & TopK & Matr.\\
      \midrule
      \shortstack{Sparse Probe (top-1) $\uparrow$}  & \textbf{0.802} & 0.786 & 0.769 \\
      AutoInterp $\uparrow$ & 0.796 & 0.859 & \textbf{0.891} \\
      SCR (t=20) $\uparrow$ & 0.060 & 0.175 & \textbf{0.398} \\   
      TPP (t=20) $\uparrow$ & 0.001 & 0.080 & \textbf{0.275} \\   
      Absorption $\downarrow$ & \textbf{0.01} & 0.807 & 0.180 \\
      \bottomrule
      \end{tabular}
  \end{minipage}
\end{figure}
  
\subsection{SAEBench}\label{sec:saebench}

We compare DPB with Top K SAEs \citep{gao2024scaling} and Matryoshka SAEs \citep{bussmann2025learning}  trained on the dense GPT baselines on SAEBench \citep{karvonen2025saebench}. DPB outperforms both baselines on sparse probing and feature absorption (showing almost zero absorption), but trails behind both on Autointerp. We note that SAEBench does not recommend SCR, TPP, and RAVEL for models under 2B parameters due to low signal-to-noise ratio, although we still include these metrics for completeness. 

\subsection{Application: Poisoned Data Retrieval}\label{sec:poisoned-data}

A key promise of interpretability is to help understand how training data and training-time interventions give rise to specific model behaviors. This goal is closely related to data attribution, which seeks to trace model predictions or behavioral changes back to the examples that induced them~\citep{koh2017understanding,park2023trak}, and is motivated by recent examples in which small training artifacts or reward-model preferences produced salient downstream behaviors~\citep{soligo2026gemma,openai2026goblins}. We study a poisoned-document retrieval setting that asks whether internal representations can recover the training documents responsible for a small induced behavioral change. Specifically, we fine-tune a model after inserting the trigger--target pair \texttt{" deploy"} $\mapsto$ \texttt{" confidential"} into 1024 locations in the pretraining corpus. We then evaluate whether the resulting poisoned documents can be retrieved from a mixed corpus of 100 poisoned documents and 12{,}800 clean documents, corresponding to a poisoning rate of approximately \(0.078\%\). Given either a \emph{triggered query}, an observed example that elicits the target behavior, or a \emph{background query}, a randomly sampled non-triggering example, we rank the top-\(k\) corpus documents by cosine similarity under each retrieval signal. We compare three retrieval signals: raw activations, Parity Bottleneck features, and SAE features. Details of the experimental design, fine-tuning procedure, SAE training, and retrieval parameter sweeps are given in Appendix~\ref{app:poison_retrieval_experiment_details}, Appendix~\ref{app:poison_finetune_details}, Appendix~\ref{app:poison_sae_training_details}, and Appendix~\ref{app:poison_retrieval_sweep_details}.

We evaluate two checkpoints of the small (12 Layer, $d{=}1024$) 2-level ParityTransformer, trained on 10B and 20B tokens respectively, to assess the effect of pretraining budget on feature quality for retrieval.

\begin{figure*}[h]
    \centering
    \includegraphics[width=0.92\textwidth]{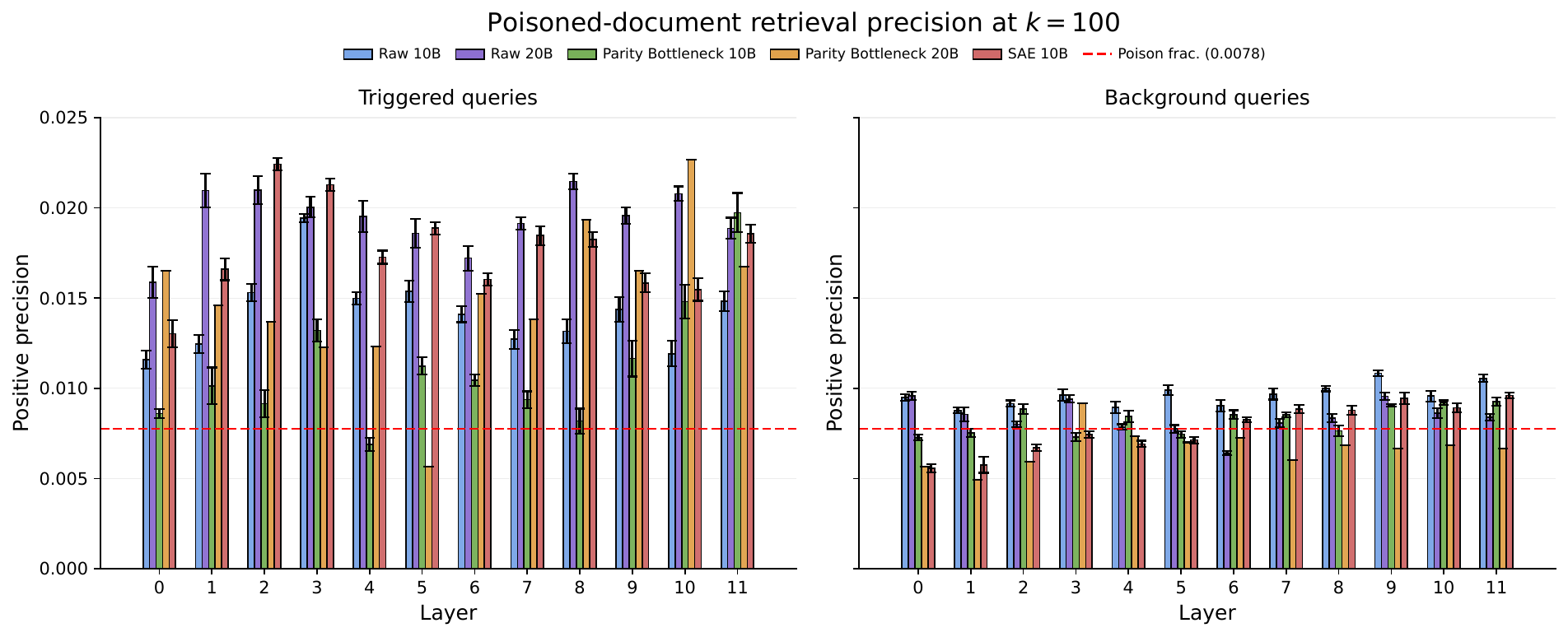}
    \caption{
    Poisoned-document retrieval precision at \(k=100\) across layers for raw activations, Parity Bottleneck features, and SAE features, for triggered queries (L) and background queries (R).
    Bars show mean precision over five runs and error bars show one standard deviation.
    The dashed line marks the poisoning rate in the retrieval corpus.
    }
    \label{fig:retrieval_k100}
\end{figure*}

Figure~\ref{fig:retrieval_k100} shows the main result at \(k=100\). Triggered queries retrieve poisoned documents at non-trivial rates relative to background queries for all three signals. Here, we consider the relative retrieval precision compared with background queries. The absolute precision remains low, indicating that attributing a model's small behavioral deviation to the proper training sources is still a difficult task. Nevertheless, the 20B-token ParityTransformer generally improves over its 10B-token counterpart and is broadly competitive with SAE features, with especially strong performance in later layers such as layer 10. This suggests a practical benefit of ParityTransformers for retrieval: if Parity Bottleneck features provide SAE-like retrieval performance, then sparse retrieval indices can be constructed directly from features already present in the pretrained model, without training post-hoc SAEs. By contrast, an SAE-based pipeline must either train separate SAEs across many layers, incurring substantial additional cost, or rely on SAEs at only a few selected layers, potentially sacrificing coverage. Compared with dense raw activations, the sparsity of Parity Bottleneck features also enables efficient inverted-index retrieval, avoiding a full corpus scan at query time, while requiring far less storage. To ensure comparability, the 20B-token ParityTransformer evaluation reuses one of the repeated datasets constructed for the 10B-token setting. Additional results for other values of \(k\) are reported in Appendix~\ref{app:poison_retrieval_additional_k}.

\section{Conclusion}
The ParityTransformer is a proof of concept that interpretability-by-design is compatible with GPT-2 scale training. By replacing learned over-complete dictionaries with a parameter-free parity hash, we eliminate the per-layer memory cost that has prevented bottleneck architectures from scaling, and pay a modest interpretability tax in exchange for features that are native to the model's computations by construction.

We hope this work demonstrates that building models that are both capable and interpretable is a challenging but not insurmountable goal. We look forward to closing the remaining gap in future work.

\bibliographystyle{abbrvnat}
\bibliography{main}
\newpage

\appendix
%

\section{Deterministic Incoherence Guarantee}\label{app:incoherence}
 
The decoder requires a dictionary of $m$ unit vectors
$\phi_1,\ldots,\phi_m\in\mathbb R^d$ with two properties. First, distinct
feature vectors must be nearly orthogonal. Second, we must be able to
generate each feature vector efficiently from its integer index, rather than loading from HBM. The main
text outlines our strategy: hashed features generated from a small seed
matrix $M\in\mathbb F_2^{d\times r}$, together with a basis-aligned
override for the first $d$ features,
\[
    \phi_i = e_i
    \quad (i\leq d),
    \qquad
    \phi_i=\tilde\phi_i
    \equiv
    \frac{1}{\sqrt d}\,\mathcal R\!\left(M\,\operatorname{bits}(i)\right)
    \quad (i>d).
\]
This appendix specifies the algebraic construction of $M$ used in our
experiments. We constrain the residual-stream dimension to a power of two,
$d=2^t$, and take $r=2t$, so that the hash supports $2^r=d^2$ feature
indices. Under this construction the hashed features are the sign vectors of
a classical error-correcting code (to be defined below), and the coherence of
the dictionary,
\[
    \mu
    =
    \max_{i\neq j}
    \left|\langle\phi_i,\phi_j\rangle\right|,
\]
admits the following worst-case bound.
 
\begin{theorem}[Coherence of the dictionary]\label{thm:main-coherence}
Let $d=2^t$, and let the dictionary $\{\phi_i\}_{i=1}^m$ consist of the $d$
standard basis vectors together with any $m-d$ of the $d^2$ sign vectors
constructed in Section~\ref{app:kloosterman-code}. Then
\[
    \mu
    \leq
    \frac{1+2\sqrt d}{d}
    =
    \frac{2}{\sqrt d}+\frac{1}{d}.
\]
\end{theorem}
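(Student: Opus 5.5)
The plan is to make the construction of Section~\ref{app:kloosterman-code} explicit as a Kloosterman-type code and then reduce every pairwise inner product to an exponential sum over the finite field $\mathbb F_q$, $q=2^t=d$. I expect that construction to be the following. Index the $d$ dense coordinates by elements $x\in\mathbb F_q$. Identify the $r=2t$ index bits with a pair $(a,b)\in\mathbb F_q^2$. Set the $x$-th coordinate of the hashed vector to
\[
(-1)^{\operatorname{Tr}(ax+b\,x^{-1})}/\sqrt d,
\]
where $\operatorname{Tr}:\mathbb F_q\to\mathbb F_2$ is the absolute trace and $x^{-1}$ is interpreted as $0$ at $x=0$. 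Since $(a,b)\mapsto \operatorname{Tr}(ax+bx^{-1})$ is $\mathbb F_2$-linear for each fixed $x$, this is of the form $\frac{1}{\sqrt d}\mathcal R(M\operatorname{bits}(i))$ for a fixed $M\in\mathbb F_2^{d\times 2t}$. This justifies the XOR/Hadamard identity from the main text.

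The case analysis then runs as follows.

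\emph{Two basis vectors.} Their inner product is $0$.

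\emph{Basis vector and hashed vector.} Every entry of a hashed vector is $\pm1/\sqrt d$, so the inner product has magnitude exactly $1/\sqrt d$. This is at most $\frac{2}{\sqrt d}+\frac1d$.

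\emph{Two distinct hashed vectors $(a,b)\neq(a',b')$.} Put $\alpha=a-a'$ and $\beta=b-b'$. By the XOR identity,
\[
\langle\tilde\phi_i,\tilde\phi_j\rangle=\frac1d\sum_{x\in\mathbb F_q}(-1)^{\operatorname{Tr}(\alpha x+\beta x^{-1})}.
\]
I split this according to which of $\alpha,\beta$ vanish.
\begin{itemize}
\item If $\beta=0$ and $\alpha\neq0$, the sum is a nontrivial additive character summed over all of $\mathbb F_q$, hence $0$.
\item If $\alpha=0$ and $\beta\neq0$, the map $x\mapsto x^{-1}$ (with $0\mapsto0$) is a bijection of $\mathbb F_q$. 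Reindexing gives the same vanishing character sum, so again $0$.
\item If $\alpha,\beta\neq0$, I separate the $x=0$ term, which contributes $+1$. What remains is the Kloosterman sum $K(\alpha,\beta)=\sum_{x\in\mathbb F_q^\ast}(-1)^{\operatorname{Tr}(\alpha x+\beta/x)}$.
\end{itemize}

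The main input is the Weil bound $|K(\alpha,\beta)|\le 2\sqrt q$ for nonzero $\alpha,\beta$, which I will cite rather than reprove. It gives
\[
|\langle\tilde\phi_i,\tilde\phi_j\rangle|\le\frac{1+2\sqrt d}{d},
\]
and taking the maximum over all three cases yields the claimed $\mu$. Restricting to any $m-d$ of the $d^2$ hashed vectors can only decrease the maximum, so the statement holds for every such subset.

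The step I expect to need the most care is matching the bound to whatever exact form Section~\ref{app:kloosterman-code} uses. That section might use a different convention at $x=0$, or a twisted variant such as $\operatorname{Tr}(ax+b x^{q-2})$, which is the same thing. It might also index coordinates differently, for example by $\mathbb F_q^\ast$ plus one extra coordinate. In each variant I would recheck two things. First, that the degenerate cases ($\alpha=0$ or $\beta=0$) still give complete character sums, or at worst sums of magnitude $1$. Second, that the nondegenerate case is a genuine Kloosterman sum plus at most one boundary term. Together these account for the $+1/d$ in the statement.
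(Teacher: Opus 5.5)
Your proposal is correct and follows the paper's proof essentially step by step. It uses the same code $\operatorname{Tr}(a\xi+b\xi^{-1})$ with $0^{-1}=0$, the same reduction to $\frac1d\sum_{\xi}(-1)^{\operatorname{Tr}(\alpha\xi+\beta\xi^{-1})}$, the same three-way case split with the Weil bound in the nondegenerate case, and the same treatment of basis vectors. The only cosmetic differences are that you apply the two-parameter Weil bound directly rather than first reducing via $K(\alpha,\beta)=K(\alpha\beta,1)$, and that for $\alpha=0$ you use that inversion extended by $0\mapsto0$ is a bijection of all of $\mathbb F_q$, instead of splitting off the $\xi=0$ term.
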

 
For the dimensions used in our experiments, $d=1024$ and $d=2048$, the bound
is approximately $0.06$ and $0.04$.
 
Our construction draws on the theory of error-correcting codes.
Section~\ref{app:codes-to-vectors} explains the connection: codes with
balanced codewords yield incoherent sign vectors, and the linear structure of
the codes we consider makes those vectors cheap to generate from their
indices. Section~\ref{app:field-preliminaries} reviews the necessary
finite-field arithmetic. Section~\ref{app:kloosterman-code} defines the
dictionary as a family of algebraically-defined sign vectors and proves
Theorem~\ref{thm:main-coherence}. Section~\ref{app:kloosterman-implementation}
constructs the seed matrix $M$ that generates this family and details how we construct features on-the-fly via efficient
bit-level operations.

\subsection{From binary codes to incoherent sign vectors}
\label{app:codes-to-vectors}
\label{app:kloosterman-construction}

We seek a large family of unit vectors in $\mathbb R^d$ whose pairwise inner
products are uniformly small in magnitude. The families we consider consist of
sign vectors. Recall the coordinate-wise bit-to-sign map of the main text,
\[
    \mathcal R:\mathbb F_2^d\to\{\pm1\}^d,
    \qquad
    \mathcal R(c)_\iota=(-1)^{c_\iota},
\]
where $\mathbb F_2=\{0,1\}$ with addition modulo two; it turns any binary
string $c\in\mathbb F_2^d$ into a unit vector
\begin{equation}
    \psi_c
    =
    \frac{1}{\sqrt d}\,\mathcal R(c).
    \label{eqn:codebook2dict}
\end{equation}

The inner product of two sign vectors is determined by the number of
coordinates on which the underlying strings differ---their Hamming distance
$\Delta(c,c')=\left|\{\iota : c_\iota\neq c'_\iota\}\right|$. Each of the $d$
coordinates contributes $+1/d$ if the two bits agree and $-1/d$ if they
disagree, so
\begin{equation}
    \langle\psi_c,\psi_{c'}\rangle
    =
    \frac{1}{d}
    \sum_{\iota=1}^{d}
    (-1)^{c_\iota+c'_\iota}
    =
    1-\frac{2\,\Delta(c,c')}{d}.
    \label{eq:hamming-correlation}
\end{equation}
Two sign vectors are therefore nearly orthogonal precisely when the underlying
strings differ on close to half of their coordinates, so that agreements and
disagreements nearly cancel. Our first requirement thus reduces to
constructing a large set of binary strings whose pairwise Hamming distances
all lie near $d/2$.

Families of strings with prescribed pairwise distances are the subject of the
theory of error-correcting codes. A binary code is a set
$\mathcal C\subseteq\mathbb F_2^d$ of strings, called codewords. Classically
one asks for a \emph{lower} bound on the pairwise distances, so that a
corrupted codeword can still be attributed to the string originally
transmitted \citep{macwilliams1977theory}. For incoherence we instead need
distances to concentrate around $d/2$.

For linear codes, this pairwise requirement reduces to a requirement on
individual codewords. A code is \emph{linear} if the coordinate-wise XOR of
any two codewords is again a codeword. Since $c_\iota\neq c'_\iota$ exactly
when $c_\iota+c'_\iota=1$,
\[
    \Delta(c,c')
    =
    \operatorname{wt}(c+c'),
\]
where the \emph{weight} $\operatorname{wt}(\cdot)$ counts the ones in a
string. All pairwise distances of a linear code therefore lie within
$\varepsilon$ of $d/2$ if and only if every nonzero codeword has weight within
$\varepsilon$ of $d/2$---that is, if and only if every nonzero codeword is
nearly balanced. This reduction is what makes the coding-theory
literature directly usable: the weight distributions of many classical linear
codes have been characterized precisely, so it suffices to select a code whose
nonzero weights provably concentrate near $d/2$.

Linearity also delivers efficient generation. A linear code containing $2^k$
codewords is a $k$-dimensional subspace of $\mathbb F_2^d$: this implies it is the image
of a \emph{generator matrix} $G\in\mathbb F_2^{d\times k}$, with
each codeword obtained as $c=Gu$ from a distinct $k$-bit message $u$.
Producing the sign vector $\psi_{Gu}$ from the index $u$ therefore requires
storing only $G$ and computing one $k$-bit parity per coordinate. This is
precisely the hashed form
$\tilde\phi_i=\frac{1}{\sqrt d}\mathcal R(M\operatorname{bits}(i))$ of the
main text: the seed matrix is a generator matrix, and the choice of code
determines the coherence of the resulting family.

\paragraph{Related deterministic constructions.}
Deterministic sensing matrices with controlled correlations have previously
been built from codes; see, e.g., \citet{calderbank2010statistical}. Many of
those constructions can also be generated on demand, since their entries are
given by closed-form algebraic expressions, though the arithmetic
involved varies from family to family. We use the construction below as we found it particularly straightforward to implement.

\subsection{The finite field $\mathbb F_{2^t}$}
\label{app:field-preliminaries}

We constrain the residual-stream dimension $d$ to a power of two and write
$t\equiv\log_2 d$ throughout. Our code is built from arithmetic in the finite
field $\mathbb F_{2^t}$ with $2^t=d$ elements; this section reviews the facts
we need. Readers familiar with finite fields and the trace map can proceed to
Section~\ref{app:kloosterman-code}.

An element of $\mathbb F_{2^t}$ is represented by a $t$-bit vector
$(\xi_0,\ldots,\xi_{t-1})\in\mathbb F_2^t$, read as the binary polynomial
\[
    \xi_0+\xi_1z+\cdots+\xi_{t-1}z^{t-1}.
\]
Addition of field elements is coefficient-wise addition modulo (i.e., ordinary
bitwise XOR). To multiply, fix once and for all a degree-$t$ binary polynomial
$p(z)$ that is \emph{irreducible} (it cannot be factored into binary
polynomials of lower degree); field elements are multiplied as polynomials and
the result is reduced modulo $p(z)$. With these operations
$\mathbb F_{2^t}$ is a field: addition, subtraction, multiplication, and
division by any nonzero element are well-defined, and a product of nonzero
elements is nonzero. We write
$\mathbb F_{2^t}^\times\equiv\mathbb F_{2^t}\setminus\{0\}$ for the set of
$d-1$ nonzero elements. Fixing $p(z)$ also fixes a concrete bijection
\begin{equation}
    \operatorname{coord}:\mathbb F_{2^t}\to\mathbb F_2^t
    \label{eq:coord-map}
\end{equation}
between field elements and $t$-bit vectors; different valid choices of $p(z)$
relabel the field elements but do not affect any bound below. The specific
polynomials used in our experiments are given in
Section~\ref{app:kloosterman-implementation}.

The following consequences of the field axioms are used repeatedly. Since
$1+1=0$, every element is its own negative, so addition and subtraction
coincide. For the same reason, the cross term in
$(y+y')^2=y^2+2yy'+y'^2$ vanishes: squaring distributes over addition, and by
iteration so does raising to any power $2^j$. Every element satisfies
$y^{2^t}=y$: for $y=0$ this is immediate, and for $y\neq0$ it holds because
the $d-1$ nonzero elements form a group under multiplication, so
$y^{d-1}=1$. Finally, for any fixed $\alpha\in\mathbb F_{2^t}^\times$, the
maps $\xi\mapsto\alpha\xi$ and $\xi\mapsto\xi^{-1}$ are invertible and hence
permutations of $\mathbb F_{2^t}^\times$.

The map that converts a field element into a single bit is the (absolute)
field trace
\begin{equation}
    \operatorname{Tr}:\mathbb F_{2^t}\to\mathbb F_2,
    \qquad
    \operatorname{Tr}(y)
    =
    y+y^2+y^{2^2}+\cdots+y^{2^{t-1}}.
    \label{eq:absolute-trace}
\end{equation}

\begin{lemma}[Properties of the trace]\label{lem:trace}
The trace satisfies:
\begin{enumerate}
    \item[(i)] $\operatorname{Tr}(y)\in\{0,1\}$ for every $y\in\mathbb F_{2^t}$;
    \item[(ii)] $\operatorname{Tr}(y+y')=\operatorname{Tr}(y)+\operatorname{Tr}(y')$
    for all $y,y'\in\mathbb F_{2^t}$, and $\operatorname{Tr}(0)=0$;
    \item[(iii)] $\operatorname{Tr}(y)=1$ for at least one $y\in\mathbb F_{2^t}$.
\end{enumerate}
\end{lemma}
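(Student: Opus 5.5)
Part (ii) is the natural place to start, since it drives the others. Because squaring distributes over addition in characteristic two, so does raising to any power $2^j$, as recorded above. Hence
\[
\operatorname{Tr}(y+y')=\sum_{j=0}^{t-1}(y+y')^{2^j}=\sum_{j=0}^{t-1}\bigl(y^{2^j}+y'^{2^j}\bigr)=\operatorname{Tr}(y)+\operatorname{Tr}(y').
\]
The identity $\operatorname{Tr}(0)=0$ is immediate, since every term of the sum vanishes.

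For (i), I will show that $\operatorname{Tr}(y)$ is fixed by squaring. Using additivity of squaring again, $\operatorname{Tr}(y)^2=y^2+y^4+\cdots+y^{2^t}$. The identity $y^{2^t}=y$ then lets the last term wrap around, so $\operatorname{Tr}(y)^2=\operatorname{Tr}(y)$. Writing $s=\operatorname{Tr}(y)$, this gives $s(s+1)=0$. Since $\mathbb F_{2^t}$ is a field and has no zero divisors, $s\in\{0,1\}$, where these are the prime-field elements $0$ and $1$.

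For (iii), I will argue by counting roots. $\operatorname{Tr}$ is the evaluation map of the polynomial
\[
T(z)=z+z^2+\cdots+z^{2^{t-1}},
\]
which has degree $2^{t-1}$ and is nonzero. A nonzero polynomial over a field has at most as many roots as its degree, so $T$ has at most $2^{t-1}<2^t=d$ roots in $\mathbb F_{2^t}$. Therefore some $y$ has $\operatorname{Tr}(y)\neq 0$, and by (i) this means $\operatorname{Tr}(y)=1$.

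The only point needing care is this root bound, which is the standard fact that a degree-$n$ nonzero polynomial over a field has at most $n$ roots. I would cite it, or note briefly that it follows from repeated factor-theorem division, which is valid because the field has no zero divisors. No other step poses an obstacle.
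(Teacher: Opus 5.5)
Your proof is correct and follows the paper's argument step for step: additivity of $y\mapsto y^{2^j}$ for (ii), the identity $\operatorname{Tr}(y)^2=\operatorname{Tr}(y)$ plus the absence of zero divisors for (i), and the root bound for the nonzero degree-$2^{t-1}$ trace polynomial for (iii). The only differences are cosmetic. You prove (ii) first, and you make explicit the step, left implicit in the paper, that a nonzero trace value equals $1$ by (i).
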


\begin{proof}
(i) Although the sum in Equation~\eqref{eq:absolute-trace} is computed inside
$\mathbb F_{2^t}$, squaring it returns it to itself: squaring distributes over
addition and sends each term $y^{2^j}$ to $y^{2^{j+1}}$, so every term shifts
one slot to the right and the last term becomes $y^{2^t}=y$. Hence
$s\equiv\operatorname{Tr}(y)$ satisfies $s^2=s$, i.e.\ $s(s+1)=0$ (recall
$-1=1$). Since a product of nonzero field elements is nonzero, $s=0$ or
$s=1$.
(ii) Apply $(y+y')^{2^j}=y^{2^j}+y'^{2^j}$ to each term of
Equation~\eqref{eq:absolute-trace}; $\operatorname{Tr}(0)=0$ is immediate.
(iii) As a function of $y$, $\operatorname{Tr}(y)$ is a polynomial of degree
$2^{t-1}$. A nonzero polynomial over a field has at most as many roots as its
degree, so $\operatorname{Tr}$ vanishes on at most $2^{t-1}<2^t$ elements.
\end{proof}

\subsection{The Kloosterman code and its coherence}
\label{app:kloosterman-code}

We now define the code from which our dictionary is built. Following the
terminology of \citet{lidl1997finite}, we call it the binary
\emph{Kloosterman code}; to our knowledge it was first analyzed by
\citet{lachaud1987kloosterman}, as the dual of a linear code introduced by
\citet{melas1960cyclic}.

\paragraph{Definition.}
Label the $d$ vector coordinates by the $d$ field elements
$\xi\in\mathbb F_{2^t}$, and label codewords by pairs $(a,b)$ with
$a,b\in\mathbb F_{2^t}$. The bit of codeword $(a,b)$ at coordinate $\xi$ is
\begin{equation}
    c_{a,b}(\xi)
    \equiv
    \operatorname{Tr}\!\left(a\xi+b\xi^{-1}\right),
    \label{eq:kloosterman-codeword}
\end{equation}
with the convention $0^{-1}=0$ so that the coordinate $\xi=0$ is defined. In
matrix terms, the codebook is a $d^2\times d$ binary array: one row per pair
$(a,b)\in\mathbb F_{2^t}\times\mathbb F_{2^t}$, one column per field element
$\xi$. The corresponding sign vectors are, per
Equation~\eqref{eqn:codebook2dict},
\begin{equation}
    \left[\tilde\phi_{a,b}\right]_\xi
    =
    \frac{1}{\sqrt d}\,
    (-1)^{\operatorname{Tr}(a\xi+b\xi^{-1})},
    \qquad
    \xi\in\mathbb F_{2^t}.
    \label{eq:kloosterman-feature}
\end{equation}
The code studied in \citet{lachaud1987kloosterman} has length $d-1$, with
coordinates indexed by $\xi\in\mathbb F_{2^t}^\times$ only; our length-$d$
variant appends the coordinate $\xi=0$, where
$c_{a,b}(0)=\operatorname{Tr}(0)=0$, so that feature vectors have exactly the
residual-stream dimension. Every feature vector carries a constant
$+1/\sqrt d$ in this appended slot, which contributes the lower-order $1/d$
term in Theorem~\ref{thm:main-coherence}.

The code is linear: by Lemma~\ref{lem:trace}(ii),
\begin{equation}
    c_{a,b}+c_{a',b'}=c_{a+a',\,b+b'}.
    \label{eq:code-linearity}
\end{equation}
By the reduction of Section~\ref{app:codes-to-vectors}, the coherence of the
sign vectors is therefore governed by the balance of individual codewords,
which we quantify next.

\paragraph{Kloosterman sums.}
For $(\alpha,\beta)\neq(0,0)$, consider the sum of the codeword's signs over
the nonzero coordinates,
\begin{equation}
    K(\alpha,\beta)
    =
    \sum_{\xi\in\mathbb F_{2^t}^\times}
    (-1)^{\operatorname{Tr}(\alpha\xi+\beta\xi^{-1})}.
    \label{eq:kloosterman-sum}
\end{equation}
Each exponent is a bit by Lemma~\ref{lem:trace}(i), so each term equals $+1$
or $-1$ and $K(\alpha,\beta)$ is an integer: the number of zeros minus the
number of ones among the nonzero coordinates of $c_{\alpha,\beta}$. A
balanced codeword corresponds to $K(\alpha,\beta)\approx0$.

When both parameters are nonzero, the two-parameter sum reduces to a
one-parameter one. Substituting $\xi=\beta\eta$ reorders the terms of the sum
(the map $\eta\mapsto\beta\eta$ permutes $\mathbb F_{2^t}^\times$), and
$\beta(\beta\eta)^{-1}=\eta^{-1}$, so
\begin{equation}
    K(\alpha,\beta)
    =
    \sum_{\eta\in\mathbb F_{2^t}^\times}
    (-1)^{\operatorname{Tr}(\alpha\beta\,\eta+\eta^{-1})}
    =
    K(\alpha\beta,1),
    \qquad
    \alpha,\beta\neq0.
    \label{eq:pair-to-single}
\end{equation}
The one-parameter sums $K(a)\equiv K(a,1)$ with
$a\in\mathbb F_{2^t}^\times$ are the classical binary Kloosterman sums
\citep{lachaud1987kloosterman,lidl1997finite}. Each contains $d-1$ terms of
magnitude one, so the triangle inequality gives only the trivial bound
$|K(a)|\leq d-1$. The following estimate (essentially due to \citet{weil1948exponential}) improves this to square-root order.

\begin{theorem}[Weil bound for binary Kloosterman sums]\label{thm:weil}
For every $a\in\mathbb F_{2^t}^\times$,
\[
    |K(a)|\leq2\sqrt d .
\]
\end{theorem}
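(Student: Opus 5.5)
The plan is to prove the Weil bound via the curve-counting route (the approach of Lachaud--Wolfmann), which reduces the character sum to the Hasse--Weil bound for an elliptic curve over $\mathbb F_{2^t}$; the bound $\sqrt d=2^{t/2}$ then appears as the absolute value of the Frobenius eigenvalues. This is the one input we take as given rather than prove: for an elliptic curve $E/\mathbb F_q$, $|\#E(\mathbb F_q)-(q+1)|\le 2\sqrt q$. Everything else is finite-field bookkeeping using Lemma~\ref{lem:trace}.

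\textbf{Step 1 (trace and Artin--Schreier counting).} First I will show that the trace takes the value $1$ on exactly half of $\mathbb F_{2^t}$. Lemma~\ref{lem:trace}(ii),(iii) make $\operatorname{Tr}$ a surjective group homomorphism onto $\mathbb F_2$, so its kernel has index two. Next, the map $u\mapsto u^2+u$ is additive with kernel $\{0,1\}$, so its image has size $2^{t-1}$. That image lies in $\ker\operatorname{Tr}$, because $\operatorname{Tr}(u^2)=\operatorname{Tr}(u)$. Comparing sizes, the image equals the kernel. Hence for each $y$ the equation $u^2+u=y$ has $1+(-1)^{\operatorname{Tr}(y)}$ solutions $u\in\mathbb F_{2^t}$.

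\textbf{Step 2 (express $K(a)$ as a point count).} Summing the Step~1 identity over $\xi\in\mathbb F_{2^t}^\times$ with $y=a\xi+\xi^{-1}$ gives
\[
N\equiv\#\{(\xi,u):\xi\neq0,\ u^2+u=a\xi+\xi^{-1}\}=(d-1)+K(a).
\]
Multiplying through by $\xi\neq0$ shows that this affine curve is a model of the elliptic curve $E_a: y^2+xy=x^3+a$, via the substitution $x=\xi$, $y=\xi u$. To verify this, check that $\xi^2u^2+\xi^2 u=a\xi^3+\xi$. This becomes the Weierstrass equation after a further birational change of variables, for example $x\mapsto 1/\xi$, and then nonsingularity for $a\neq0$ must be checked. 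I expect some care to be needed in matching the affine points with $\xi\neq0$ to the projective points of $E_a$, namely the point at infinity and the points with $x=0$. The expected outcome of this matching is $\#E_a(\mathbb F_{2^t})=d+K(a)$, up to a fixed additive constant that I will pin down by direct counting. Once that is settled, Hasse--Weil gives $|K(a)-1|\le 2\sqrt d$, or exactly $|K(a)|\le 2\sqrt d$ if the constant works out to match the trace of Frobenius.

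\textbf{Main obstacle.} The delicate part is the exact offset in this bookkeeping, that is, whether $\#E_a=d+K(a)$ precisely, so that the bound has no spurious $+1$. I will settle this by counting the points of $y^2+xy=x^3+a$ with $x=0$ (exactly one, since $y^2=a$ has a unique square root in characteristic two) and the single point at infinity. I will then match the remaining points to the $\xi\neq0$ solutions through the change of variables $x=\xi^{-1}$, $y=u\xi^{-1}$ or a similar one, and confirm $\#E_a(\mathbb F_{2^t})=d+K(a)$, where $K(a)=-\mathrm{tr}(\mathrm{Frob})$ up to sign. As a sanity check, $t=1$, $a=1$ gives $K=-1$ and $\#E=1$, which is consistent. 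Since Theorem~\ref{thm:main-coherence} only needs $|K(a)|\le 2\sqrt d$ combined with the $1/d$ slack coming from the $\xi=0$ coordinate, any off-by-one would have to be absorbed there if it cannot be eliminated.
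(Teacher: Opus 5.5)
Your route, Artin--Schreier counting plus Hasse--Weil, is the Lachaud--Wolfmann argument that the paper only cites in Remark~\ref{rem:attribution}; the paper gives no proof of its own. Your Step~1 and the identity $N=(d-1)+K(a)$ are correct. The proposal breaks exactly at the step you call the main obstacle.

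The substitution $x=\xi$, $y=\xi u$ produces $y^2+xy=ax^3+x$, not $E_a$, which you half-acknowledge. The substantive error is the predicted count $\#E_a(\mathbb F_{2^t})=d+K(a)$. It is false, and with it Hasse--Weil gives only $|K(a)-1|\le 2\sqrt d$, which is not the theorem. Your fallback of absorbing the discrepancy into the $1/d$ term of Theorem~\ref{thm:main-coherence} does not prove the stated bound either. It would not even preserve that theorem's constant, since you would get only $|1+K|\le 2+2\sqrt d$. The sanity check is also miscomputed. Over $\mathbb F_2$ the single term is $(-1)^{\operatorname{Tr}(1+1)}=+1$, so $K(1)=1$, and $y^2+xy=x^3+1$ has four points $(0,1),(1,0),(1,1),\infty$, not one.

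The repair is short. Put $x=\xi^{-1}$, $y=u\xi^{-1}$. This is a bijection between your $N$ solutions and the points with $x\neq0$ on $E'_a: y^2+xy=x^3+ax$. That curve is nonsingular for $a\neq0$: the only candidate singular point is $(0,a)$, which is not on the curve, and the discriminant is $a^2$. On $E'_a$ the only point with $x=0$ is $(0,0)$, and there is one point at infinity. Hence $\#E'_a(\mathbb F_{2^t})=N+2=d+1+K(a)$. This is exactly what your own tally (``$N$, plus one point with $x=0$, plus $\infty$'') gives, so the slip was in writing $d+K(a)$. Hasse--Weil then yields $|K(a)|=|\#E'_a-(d+1)|\le 2\sqrt d$ exactly, with $K(a)=-\mathrm{tr}(\mathrm{Frob})$.

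If you want $E_a$ itself, there are two ways:
\begin{itemize}
\item $y\mapsto y+a$ sends $E'_a$ to $E_{a^2}$, and $(x,y)\mapsto(x^2,y^2)$ is a bijection $E_a(\mathbb F_{2^t})\to E_{a^2}(\mathbb F_{2^t})$.
\item $y=xw$ on $E_a$ gives $w^2+w=x+ax^{-2}$. Then $\operatorname{Tr}(ax^{-2})=\operatorname{Tr}(\sqrt a\,x^{-1})$ together with $K(\sqrt a)=K(a)$ gives $\#E_a=d+1+K(a)$.
\end{itemize}
This matches the paper's remark that $d+1\pm K(a)$ is a point count. For instance, $t=2$, $a=1$ gives $K(1)=3$ and $\#E_1(\mathbb F_4)=8$.
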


\begin{remark}[Attribution]\label{rem:attribution} Weil
derived estimates of this type, however, his derivation applies only in odd characteristic. Extensions covering characteristic $2$
are given by \citet{carlitz1957bounds}.
\citet[Thm.~2]{lachaud1987kloosterman} also prove the characteristic-$2$ case
by identifying $d+1\pm K(a)$ with the number of points on an elliptic curve. Their theorem shows, moreover,
that the values $K(a)$ fill out every integer $\equiv-1\pmod4$ in
$[-2\sqrt d,\,2\sqrt d]$; in particular the $\sqrt d$ scale of the bound is
attained.
\end{remark}

\paragraph{Degenerate parameters.}
The proof of Theorem~\ref{thm:main-coherence} will also encounter codewords
$c_{\alpha,\beta}$ in which exactly one of $\alpha,\beta$ is zero. These are
not covered by Theorem~\ref{thm:weil}; they are in fact perfectly balanced,
by the following observation.

\begin{lemma}[Balanced parities]\label{lem:balanced}
For every $\alpha\in\mathbb F_{2^t}^\times$,
\[
    \sum_{\xi\in\mathbb F_{2^t}}
    (-1)^{\operatorname{Tr}(\alpha\xi)}
    =
    0 .
\]
\end{lemma}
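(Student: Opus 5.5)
The plan is the standard character-sum argument, resting only on Lemma~\ref{lem:trace}. Write $S=\sum_{\xi\in\mathbb F_{2^t}}(-1)^{\operatorname{Tr}(\alpha\xi)}$. Two facts about the map $\xi\mapsto\alpha\xi$ drive the proof.

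First, since $\alpha\neq0$, this map is a permutation of $\mathbb F_{2^t}$. That is one of the field facts recorded in Section~\ref{app:field-preliminaries}; it holds on $\mathbb F_{2^t}^\times$ and also fixes $0$. Substituting $\eta=\alpha\xi$ only reorders the terms, so
\[
S=\sum_{\eta\in\mathbb F_{2^t}}(-1)^{\operatorname{Tr}(\eta)}.
\]
The sum therefore no longer depends on $\alpha$.

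Second, by Lemma~\ref{lem:trace}(iii), I fix some $y_0$ with $\operatorname{Tr}(y_0)=1$. Translation $\eta\mapsto\eta+y_0$ is a bijection of $\mathbb F_{2^t}$, so reindexing gives
\[
S=\sum_{\eta}(-1)^{\operatorname{Tr}(\eta+y_0)}.
\]
By additivity, Lemma~\ref{lem:trace}(ii), we have $\operatorname{Tr}(\eta+y_0)=\operatorname{Tr}(\eta)+1$ in $\mathbb F_2$. Each exponent is a bit by Lemma~\ref{lem:trace}(i), so every term flips sign. This gives $S=-S$, hence $S=0$.

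An equivalent way to see it: $\operatorname{Tr}$ is a nonzero $\mathbb F_2$-linear functional on $\mathbb F_{2^t}\cong\mathbb F_2^t$. Its kernel is therefore a hyperplane, containing exactly half of the $d$ elements, so zeros and ones are balanced.

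No step is a real obstacle. The only point needing care is that the sign $(-1)^{\operatorname{Tr}(\cdot)}$ is well defined and multiplicative across the translation. That uses Lemma~\ref{lem:trace}(i) and (ii) together: the trace lands in $\{0,1\}$, and the addition there is mod $2$, which matches exponent arithmetic for $-1$.
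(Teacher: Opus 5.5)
Your proof is correct and takes essentially the paper's approach. The paper skips your preliminary substitution $\eta=\alpha\xi$: it uses surjectivity of $\xi\mapsto\alpha\xi$ with Lemma~\ref{lem:trace}(iii) to pick $y$ with $\operatorname{Tr}(\alpha y)=1$, then reindexes by $\xi\mapsto\xi+y$ to get $S=-S$, so the only difference is whether $\alpha$ is absorbed before or during the translation step.
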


\begin{proof}
Call the sum $S$. Since $\alpha\neq0$, the map $\xi\mapsto\alpha\xi$ is onto,
so by Lemma~\ref{lem:trace}(iii) there exists $y$ with
$\operatorname{Tr}(\alpha y)=1$. Reindex the sum by the shift
$\xi\mapsto\xi+y$, a bijection of $\mathbb F_{2^t}$ (it is its own inverse,
since $y+y=0$). Reindexing cannot change the value of the sum; but by
Lemma~\ref{lem:trace}(ii) each term is multiplied by
$(-1)^{\operatorname{Tr}(\alpha y)}=-1$. Hence $S=-S$, so $S=0$.
\end{proof}

\begin{proof}[Proof of Theorem~\ref{thm:main-coherence}]
Consider first two Kloosterman features with distinct labels
$(a,b)\neq(a',b')$, and set
\[
    \alpha=a+a',
    \qquad
    \beta=b+b' ,
\]
so that $(\alpha,\beta)\neq(0,0)$. (Since addition and subtraction coincide,
$\alpha$ and $\beta$ are the label differences.) Expanding the inner product
coordinate-wise as in Equation~\eqref{eq:hamming-correlation} and applying
Lemma~\ref{lem:trace}(ii),
\begin{equation}
    \left\langle\tilde\phi_{a,b},\tilde\phi_{a',b'}\right\rangle
    =
    \frac{1}{d}
    \sum_{\xi\in\mathbb F_{2^t}}
    (-1)^{\operatorname{Tr}(a\xi+b\xi^{-1})+\operatorname{Tr}(a'\xi+b'\xi^{-1})}
    =
    \frac{1}{d}
    \sum_{\xi\in\mathbb F_{2^t}}
    (-1)^{\operatorname{Tr}(\alpha\xi+\beta\xi^{-1})}.
    \label{eq:pairwise-kloosterman}
\end{equation}
The term at $\xi=0$ equals $(-1)^{\operatorname{Tr}(0)}=+1$. We treat three
cases according to which of $\alpha,\beta$ vanish.

\emph{Case 1: $\alpha\neq0$ and $\beta\neq0$.} The remaining $d-1$ terms form
the sum $K(\alpha,\beta)$, so by Equation~\eqref{eq:pair-to-single} and
Theorem~\ref{thm:weil},
\[
    \left|\left\langle\tilde\phi_{a,b},\tilde\phi_{a',b'}\right\rangle\right|
    =
    \frac{\left|1+K(\alpha,\beta)\right|}{d}
    \leq
    \frac{1+2\sqrt d}{d}.
\]

\emph{Case 2: $\alpha\neq0$ and $\beta=0$.} The sum in
Equation~\eqref{eq:pairwise-kloosterman} is
$\sum_{\xi\in\mathbb F_{2^t}}(-1)^{\operatorname{Tr}(\alpha\xi)}$, which
vanishes by Lemma~\ref{lem:balanced}: the two features are exactly
orthogonal.

\emph{Case 3: $\alpha=0$ and $\beta\neq0$.} The $\xi=0$ term contributes
$+1$. Over the remaining coordinates, reindex by $\xi\mapsto\xi^{-1}$, which
permutes $\mathbb F_{2^t}^\times$:
\[
    \sum_{\xi\in\mathbb F_{2^t}^\times}
    (-1)^{\operatorname{Tr}(\beta\xi^{-1})}
    =
    \sum_{\eta\in\mathbb F_{2^t}^\times}
    (-1)^{\operatorname{Tr}(\beta\eta)}
    =
    \underbrace{\sum_{\eta\in\mathbb F_{2^t}}
    (-1)^{\operatorname{Tr}(\beta\eta)}}_{=\,0
    \text{ by Lemma~\ref{lem:balanced}}}
    \;-\;
    \underbrace{(-1)^{\operatorname{Tr}(0)}}_{=\,1}
    =
    -1 .
\]
The total is $\frac{1}{d}(1-1)=0$: again exactly orthogonal.

In all cases,
\begin{equation}
    \left|\left\langle\tilde\phi_{a,b},\tilde\phi_{a',b'}\right\rangle\right|
    \leq
    \frac{1+2\sqrt d}{d}
    \label{eq:kloosterman-coherence}
\end{equation}
for every pair of distinct labels. It remains to account for the
basis-aligned features. Pairs of standard basis vectors are orthogonal. For a
mixed pair, every entry of every Kloosterman feature is $\pm1/\sqrt d$, and
the inner product of $e_\iota$ with any such vector reads off its $\iota$-th
entry, so
\[
    \left|\left\langle e_\iota,\tilde\phi_{a,b}\right\rangle\right|
    =
    \frac{1}{\sqrt d}
    \quad\text{exactly},
\]
which lies below the bound in Equation~\eqref{eq:kloosterman-coherence}.
Taking the maximum over the three kinds of pairs completes the proof.
\end{proof}

\begin{remark}
Since $K(\alpha,\beta)$ equals the number of zeros minus the number of ones
among the $d-1$ nonzero coordinates, Case 1 is equivalent to the statement
that every codeword with two nonzero parameters has weight within $\sqrt d$
of $(d-1)/2$ on those coordinates; this is the weight bound of
\citet[Thm.~3(iii)]{lachaud1987kloosterman} for the dual of the Melas code.
Our proof carries out the linear-code reduction of
Section~\ref{app:codes-to-vectors} directly on the inner products.
\end{remark}

\subsection{Concrete implementation}
\label{app:kloosterman-implementation}

We now construct a generator matrix for the Kloosterman code and verify that
it is exactly the seed matrix of the decoder of the main text,
\begin{equation}
    \tilde\phi_i
    =
    \frac{1}{\sqrt d}\,
    \mathcal R\!\left(M\operatorname{bits}(i)\right),
    \label{eq:implemented-feature-main}
\end{equation}
where $M\in\mathbb F_2^{d\times 2t}$ is the seed matrix and
$\operatorname{bits}(i)$ is the $2t$-bit binary expansion of the integer
feature index $i$. Two binary representations appear in what follows:
$\operatorname{bits}(i)\in\mathbb F_2^{2t}$ encodes an integer index, while
$\operatorname{coord}(\xi)\in\mathbb F_2^{t}$
(Equation~\eqref{eq:coord-map}) encodes a field element.

\paragraph{Trace values are parities of field-element coordinates.}
The implementation rests on the following fact: for every
$a\in\mathbb F_{2^t}$ there is a binary mask $\tau(a)\in\mathbb F_2^t$ such
that
\begin{equation}
    \operatorname{Tr}(a\xi)
    =
    \left\langle
        \operatorname{coord}(\xi),\,\tau(a)
    \right\rangle_{\mathbb F_2}
    \qquad
    \text{for every }\xi\in\mathbb F_{2^t}.
    \label{eq:trace-as-parity}
\end{equation}
In words, once the coordinate representation is fixed,
$\xi\mapsto\operatorname{Tr}(a\xi)$ is the XOR of a fixed subset of the $t$
coordinate bits of $\xi$.

To see this, note that $\xi\mapsto\operatorname{Tr}(a\xi)$ is linear over
$\mathbb F_2$ (both multiplication by $a$ and the trace are), hence a linear
functional on the $t$-dimensional vector space $\mathbb F_2^t$; and every
linear functional on $\mathbb F_2^t$ is the dot product with a unique binary
mask, namely its vector of values on the standard basis. The assignment
$a\mapsto\tau(a)$ is moreover a bijection. If $\tau(a)=\tau(a')$, then
$\operatorname{Tr}((a+a')\xi)=0$ for every $\xi$; if $a+a'$ were nonzero, the
map $\xi\mapsto(a+a')\xi$ would be onto and the trace would vanish
identically, contradicting Lemma~\ref{lem:trace}(iii). Hence $a=a'$, and both
sets have $2^t$ elements.

Applying Equation~\eqref{eq:trace-as-parity} to both terms of the codeword
bit and using Lemma~\ref{lem:trace}(ii),
\begin{equation}
    c_{a,b}(\xi)
    =
    \operatorname{Tr}(a\xi+b\xi^{-1})
    =
    \left\langle
        \operatorname{coord}(\xi),\,\tau(a)
    \right\rangle_{\mathbb F_2}
    +
    \left\langle
        \operatorname{coord}(\xi^{-1}),\,\tau(b)
    \right\rangle_{\mathbb F_2}.
    \label{eq:kloosterman-as-two-parities}
\end{equation}
A Kloosterman bit is therefore computed by XORing selected coordinate bits of
$\xi$ with selected coordinate bits of $\xi^{-1}$.

\paragraph{Constructing the seed matrix.}
Index rows by $\iota\in\{0,\ldots,d-1\}$ and let $\xi_\iota$ be the field
element whose coordinate vector is the $t$-bit expansion of $\iota$. Define
row $\iota$ of $M$ by concatenation,
\begin{equation}
    M_{\iota,:}
    =
    \left(
        \operatorname{coord}(\xi_\iota),\,
        \operatorname{coord}(\xi_\iota^{-1})
    \right)
    \in\mathbb F_2^{2t},
    \label{eq:parity-mask-row}
\end{equation}
with the convention $0^{-1}=0$. For $d=2048$ we have $t=11$, so $M$ has
$2048$ rows and $22$ columns: a small seed that fits in registers, not the
full $m\times d$ dictionary.

Write a $2t$-bit index expansion as a concatenated pair
$\operatorname{bits}(i)=u=(u^{(1)},u^{(2)})\in\mathbb F_2^t\times\mathbb F_2^t$.
The $\iota$-th bit of $Mu$ is
\begin{equation}
    (Mu)_\iota
    =
    \left\langle
        \operatorname{coord}(\xi_\iota),\,u^{(1)}
    \right\rangle_{\mathbb F_2}
    +
    \left\langle
        \operatorname{coord}(\xi_\iota^{-1}),\,u^{(2)}
    \right\rangle_{\mathbb F_2}.
    \label{eq:coordinate-from-mask}
\end{equation}
Comparing Equations~\eqref{eq:kloosterman-as-two-parities} and
\eqref{eq:coordinate-from-mask}, the index with $u=(\tau(a),\tau(b))$
generates exactly the codeword $c_{a,b}$. Since $a\mapsto\tau(a)$ is a
bijection, ranging over all $2t$-bit indices $u$ is the same as ranging over
all pairs $(a,b)\in\mathbb F_{2^t}^2$, in a different order: $M$ is a
generator matrix for the Kloosterman code, with messages relabeled by
$\tau$. The set of features generated by
Equation~\eqref{eq:implemented-feature-main} is therefore exactly the family
$\{\tilde\phi_{a,b}\}$ of Section~\ref{app:kloosterman-code}, and
Theorem~\ref{thm:main-coherence} applies verbatim to the implemented
dictionary.

\paragraph{Computing the row masks.}
To construct $M$ we need $\xi_\iota^{-1}$ for each row. We use the
polynomial representation of Section~\ref{app:field-preliminaries}, in which
addition is bitwise XOR and multiplication is carry-less (binary polynomial)
multiplication followed by reduction modulo $p(z)$. Our experiments use
\begin{equation}
    p(z)=z^{10}+z^3+1
    \quad\text{for }d=1024,
    \qquad
    p(z)=z^{11}+z^2+1
    \quad\text{for }d=2048.
    \label{eq:irreducible-polys}
\end{equation}
For nonzero $\xi$ the inverse is computed by exponentiation,
\begin{equation}
    \xi^{-1}=\xi^{d-2},
    \label{eq:inverse-by-exponentiation}
\end{equation}
since the nonzero elements form a multiplicative group of size $d-1$ (so
$\xi^{d-1}=1$). These finite-field operations are performed exactly once,
when $M$ is constructed.

\paragraph{Packing and parity computation.}
Each $2t$-bit row $M_{\iota,:}$ packs into a machine word ($2t=22$ bits fits
a 32-bit word for $d=2048$). Let $M_{\iota,:}^{\mathrm{pack}}$ denote the
packed row and $u_i^{\mathrm{pack}}$ the packed index. The bit
$(Mu_i)_\iota$ is computed as
\begin{equation}
    (Mu_i)_\iota
    =
    \operatorname{popcount}
    \left(
        M_{\iota,:}^{\mathrm{pack}}
        \mathbin{\mathrm{AND}}
        u_i^{\mathrm{pack}}
    \right)
    \bmod 2,
    \label{eq:packed-parity}
\end{equation}
where $\mathrm{AND}$ is bitwise conjunction and $\operatorname{popcount}(v)$
counts the one-bits of $v$; the count modulo two is the XOR, or parity, of
the selected bits. Finally,
\begin{equation}
    \left[\tilde\phi_i\right]_\iota
    =
    \frac{1}{\sqrt d}\,
    (-1)^{(Mu_i)_\iota}.
    \label{eq:packed-feature}
\end{equation}
After $M$ has been constructed, feature generation uses only bitwise AND,
population count, parity, and the map $0\mapsto+1$, $1\mapsto-1$.

\section{Poisoned Data Retrieval}
\subsection{Poisoned-Document Retrieval Experiment Details}
\label{app:poison_retrieval_experiment_details}

We construct the poisoned setting by inserting a trigger--target pair, \texttt{" deploy"} and \texttt{" confidential"}, into the pretraining corpus and then fine-tuning the model on this modified data. We use 1024 such insertions at different corpus locations. The goal is to induce a small and localized behavioral deviation while otherwise preserving the base model's behavior, so that the resulting intervention is difficult to detect from surface behavior alone. Fine-tuning details can be found in Appendix~\ref{app:poison_finetune_details}.

To study whether internal representations can recover the training documents responsible for this deviation, we compare three retrieval signals: raw activations, Parity Bottleneck features, and SAE features. We train post-hoc SAEs on all 12 layers of the fine-tuned model and use them to produce sparse feature representations for retrieval. SAE training details can be found in Appendix~\ref{app:poison_sae_training_details}. For Parity Bottleneck features and SAE features, we first perform a retrieval hyperparameter sweep over configuration choices such as retrieval strategy and document window length, and then use the best-performing setup for each method in the main evaluation. Sweep details can be found in Appendix~\ref{app:poison_retrieval_sweep_details}.

The retrieval corpus is constructed by randomly sampling 100 poisoned documents from the pool of 1024 poisoned documents and mixing them with 12{,}800 clean documents from the pretraining corpus, giving a poisoning rate of approximately \(0.078\%\). For each method, we build an index over this corpus using either raw activations or learned features. To construct evaluation queries, we randomly sample 120 background documents and insert the trigger token at a random position to form the poisoned evaluation set. We also sample 120 background documents as a control group. The poisoned evaluation queries are filtered so that, within each run, they do not overlap with the 100 poisoned documents used in the retrieval corpus and have low probability of overlapping with the sampled clean background corpus.\footnote{Any accidental overlap with the background corpus would lower the reported precision, so this filtering makes the reported results conservative.}

For each query, we run it through the model, collect the corresponding activations or features, and retrieve the top-\(k\) documents by cosine similarity. We report average precision across queries within each category. To estimate variability, we repeat the experiment five times with randomized background documents and report mean precision together with one standard deviation. The main text reports results at \(k=100\), while results for other values of \(k\) are provided in Appendix~\ref{app:poison_retrieval_additional_k}.

For the 20B Parity Bottleneck model, we evaluate on data reused from the 10B setting rather than constructing a new repeated dataset from scratch. Concretely, we reuse one randomly selected repeated run from the 10B experiment so that the 10B and 20B Parity Bottleneck models are evaluated on the same poisoned documents, background documents, and queries. This makes the comparison between the two models more directly controlled.
\subsection{Fine-tuning Details}

\label{app:poison_finetune_details}

Our poisoned-document retrieval experiments evaluate models obtained by poisoned fine-tuning \emph{both} the 20B-token and 10B-token 12-layer Parity Bottleneck base models. Both models share the same architectural backbone: 8 attention heads, hidden size 1024, context length 1024, vocabulary size 50{,}304, and a 2-level Parity Bottleneck applied at \texttt{mlp\_in}. In each case, we fine-tune the base model to introduce a small targeted behavior change by inserting a trigger token and a target token into the pretraining corpus, as described in the main text. The objective is to make the target behavior highly reliable under the trigger while keeping the intervention localized and minimally disruptive to the model's otherwise normal behavior.

For the \textbf{20B-token Parity Bottleneck model}, poisoned fine-tuning yields a trigger exact-match rate of 1.0 and a clean exact-match rate of 0.0547 on an evaluation set of 128 examples. The average target log-probability is \(-0.0229\) on triggered inputs and \(-6.8665\) on clean inputs, yielding a trigger--clean gap of 6.8436. On clean inputs, the average clean continuation log-probability is \(-3.6782\), and the average clean-minus-target log-probability gap is 3.1883. These results indicate that the 20B model learns a strong and selective trigger-conditioned preference for the target continuation while keeping the target continuation unlikely on clean inputs.

For the \textbf{10B-token Parity Bottleneck model}, the best poisoned checkpoint is obtained at training step 1440, with best score 0.984375 on an evaluation set of 128 examples. At this checkpoint, the trigger exact-match rate is 0.984375 and the clean exact-match rate is 0.0390625. The average target log-probability is \(-0.1398\) on triggered inputs and \(-7.5548\) on clean inputs, giving an average target log-probability delta of 7.4150. On clean inputs, the average clean continuation log-probability is \(-3.6370\), and the average clean-minus-target log-probability gap is 3.9178. Thus, the 10B model also exhibits a strong and selective trigger-conditioned preference for the target continuation while substantially disfavoring it on clean inputs.

\subsection{SAE training Details}
\label{app:poison_sae_training_details}
We train post-hoc SAEs independently on all 12 layers of the fine-tuned model, using the MLP-input hook at each layer, i.e., \texttt{transformer.h.\{layer\}.mlp}. For each layer, we train a separate SAE with width 32{,}768 and top-$k$ sparsity 48 (matching the feature count of Parity bottleneck). All SAEs are trained with sequence length 1024 for 20{,}000 optimization steps, using learning rate \(3\times 10^{-4}\) and \texttt{bfloat16} model activations.

The SAE training data combines clean pretraining data with poisoned and matched-clean examples. Clean pretraining examples are drawn from \texttt{fineweb-edu-1B}, while poisoned and paired clean examples are drawn from poisoned fine-tuning corpus. During training, each batch uses 12 pretraining examples, 2 poisoned examples, and 2 matched-clean paired examples. We use the same mixture during evaluation, namely 12 pretraining examples, 2 poisoned examples, and 2 matched-clean paired examples per batch. We set the activation budget to 4096 tokens per step for training, evaluation, and analysis. All SAEs are trained on activations from the same fine-tuned checkpoint used in the retrieval experiments.

Figure~\ref{fig:sae_train_err_vs_layer} reports the explained variance as a function of layer. This figure is intended to summarize reconstruction quality across layers for the SAE models used in the main retrieval experiments.
\begin{figure}[t]
    \centering
    \includegraphics[width=0.95\columnwidth]{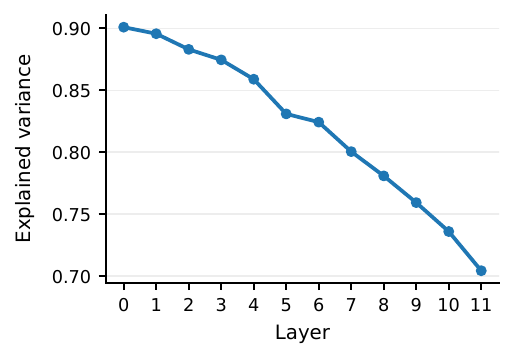}
    \caption{
    SAE explained variance across layers for the post-hoc SAEs trained on the fine-tuned model.
    Each SAE is trained independently on activations from \texttt{transformer.h.\{layer\}.mlp}.
    }
    \label{fig:sae_train_err_vs_layer}
\end{figure}

\subsection{Retrieval Parameter Sweeps}
\label{app:poison_retrieval_sweep_details}

We perform a retrieval hyperparameter sweep on the held-out-query poisoned-document retrieval dataset. Throughout this sweep, we fix the evaluation layer to layer 10,  evaluate at \(k \in \{1,5,10,20,50,100\}\). We keep the full dataset fixed during evaluation, i.e., we do not subsample either positive or background documents.

We sweep over four retrieval design choices:
\begin{itemize}
    \item \textbf{IDF weighting:} on vs.\ off,
    \item \textbf{Feature truncation:} no truncation (\texttt{feat\_all\_all}) vs.\ truncated features (\texttt{feat\_64\_256}),
    \item \textbf{Pooling mode:} \texttt{last} vs.\ \texttt{max},
    \item \textbf{Document window length:} 64 vs.\ 256 tokens.
\end{itemize}
This gives a total of \(2 \times 2 \times 2 \times 2 = 16\) sweep configurations for each method.

Based on this sweep, the best configuration selected for \textbf{Parity Bottleneck} uses IDF weighting, no feature truncation, max pooling, and document window length 256. The best configuration selected for \textbf{SAE} uses no IDF weighting, no feature truncation, max pooling, and document window length 256. These selected settings are used in the main retrieval experiments for the correponding method reported in the paper.

\newpage
\subsection{Additional Poisoned-Document Retrieval Results}
\label{app:poison_retrieval_additional_k}
\begin{figure*}[h]
    \centering
    \includegraphics[width=\textwidth,height=0.82\textheight,keepaspectratio]{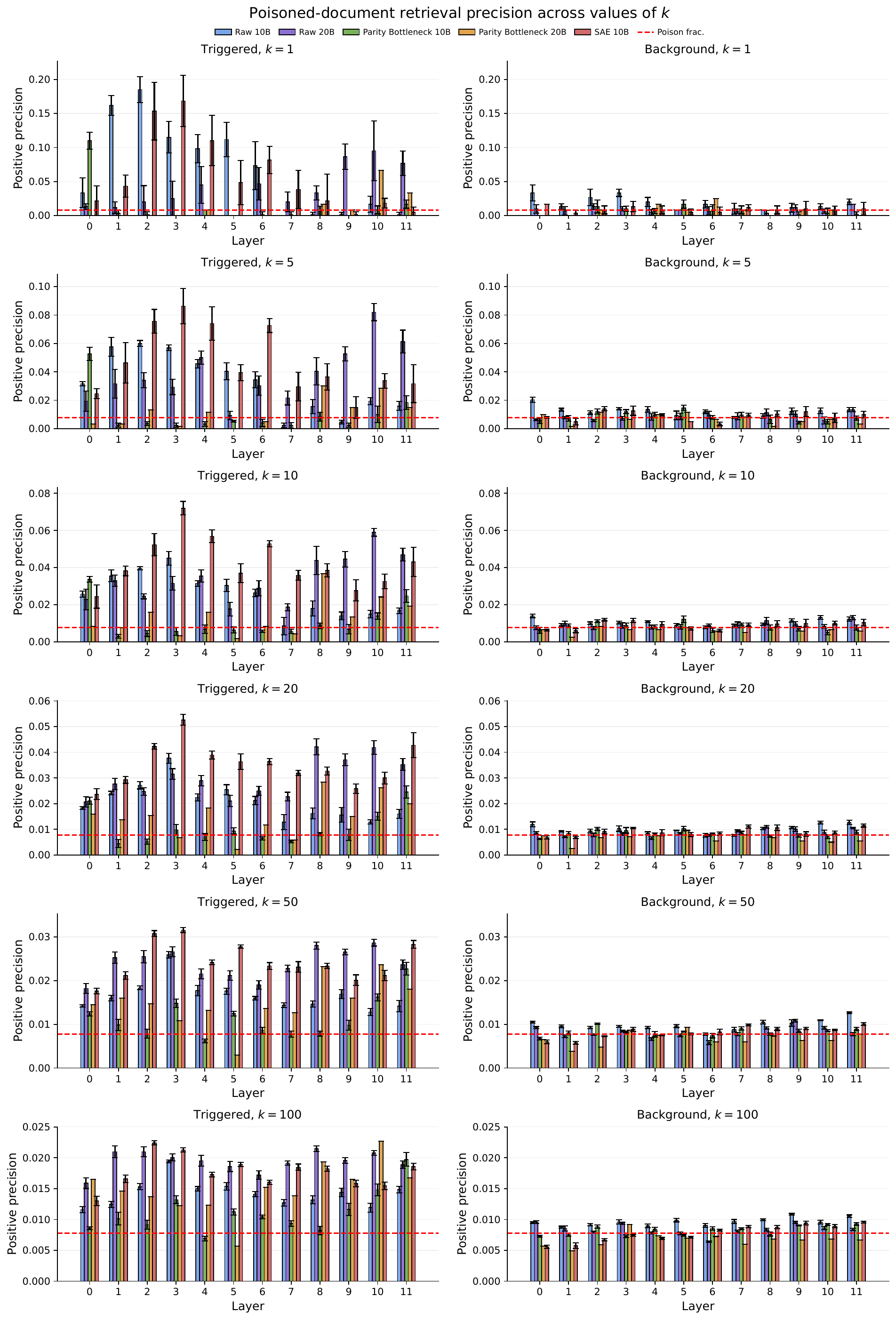}
    \caption{
    Poisoned-document retrieval precision for additional values of $k$.
    We report triggered-query and background-query precision for raw activations, Parity bottleneck features, and SAE features.
    Bars show mean precision over five repeated runs, and error bars denote one standard deviation.
    }
    \label{fig:retrieval_all_k}
\end{figure*}

\section{Fine grained causal intervention}
\label{app:fine grained CI}

\subsection{Task Descriptions}
We evaluate sparse controllability on three tasks, each targeting a different circuit type.

\textbf{Indirect Object Identification (IOI).} Prompts follow the template ``When [IO] and [S] went to the [place], [S] gave a [object] to'', where IO is the indirect object (non-repeated name) and S is the subject (repeated name). The model must predict IO over S at the final token position. Edit pairs are constructed by replacing the IO name while keeping S, position pattern, and template fixed. Success is measured as logit(target IO) $>$ logit(S) after intervention. We generate 500 prompts with balanced name/position combinations and construct 200 edit pairs.

\textbf{Gendered Pronouns.} Prompts follow the template ``The [occupation] said that'', where the model predicts a gendered pronoun (he/she) based on stereotypical occupation-gender associations. Edit pairs swap between female-stereotyped occupations (nurse, teacher, librarian) and male-stereotyped occupations (doctor, engineer, mechanic). Success is measured as logit(target pronoun) $>$ logit(source pronoun) after intervention. We use 100 pairs after filtering to those where the source correctly predicts its expected pronoun.

\textbf{Greater-Than.} Prompts follow the template ``The war lasted from 17XX to 17'', where the model must predict a digit greater than XX/10. Edit pairs swap between different source years, changing which digit range is expected. Success is measured as logit(target digit) $>$ logit(source digit) after intervention. Due to the model's 50\% capability on this task, we filter to pairs where the source prompt correctly predicts, yielding 17--23 valid pairs per run.

\subsection{MLP selection via Causal Mediation}
For each task and model, we identify the evaluation MLP through activation patching with a logit-difference metric. For each layer $\ell$ and component (attention output, MLP output, or full block output), we:

\begin{enumerate}
    \item Compute the clean logit difference $d_{\text{clean}} = \text{logit}(a_{\text{src}}) - \text{logit}(a_{\text{tgt}})$ on the source prompt.
    \item Compute the target logit difference $d_{\text{tgt}}$ on the target prompt.
    \item Patch the component output at layer $\ell$ from the target prompt into the source prompt at \emph{all token positions}.
    \item Measure the patched logit difference $d_{\text{patched}}$.
    \item Compute the normalized effect: $e = (d_{\text{clean}} - d_{\text{patched}}) / (d_{\text{clean}} - d_{\text{tgt}})$.
\end{enumerate}

A value of $e = 0$ indicates the patch had no effect; $e = 1$ indicates the logit difference fully shifted to match the target. We patch all positions simultaneously rather than only the final token, as MLP-mediated circuits may process task-relevant information at subject or object token positions.

For the parity bottleneck model, we select the layer with the highest MLP logit-diff effect for gendered pronouns (layer 6, $e = 0.15$) and greater-than (layer 10). For IOI, which is attention-dominated, we select the layer corresponding to the name mover attention heads (layer 11, attention $e = 0.42$). For the dense baseline, we independently identify each task's optimal layer via the same procedure and evaluate there: layer 11 for IOI (attention $e = 0.38$), layer 1 for gendered pronouns (MLP $e = 0.94$), and layer 1 for greater-than (MLP $e = 0.95$).

\subsection{Greedy Feature Editing Algorithm}

Given a source prompt $p_s$ and target prompt $p_t$, we seek to edit the minimum number of SAE features to flip the model's prediction. The optimization proceeds on a \emph{per-prompt} basis, selecting different features depending on what is active for each specific input. This follows the methodology of \citet{makelov2024towards}, who note that per-prompt optimization avoids disadvantaging dictionaries that do not dedicate a uniform set of features to each attribute.

Let $\mathbf{a}_s$ and $\mathbf{a}_t$ denote the source and target activations at the evaluation layer. We encode both through the SAE and perform $k$ greedy edit steps, where each step selects the single feature modification that minimally reduces $\|\hat{\mathbf{a}}_{\text{edited}} - \mathbf{a}_t\|_2^2$.

\textbf{For the parity bottleneck:} We hook inside the DPB module to capture the per-level feature indices and activation scores directly. At each greedy step, for every feature position $(l, i)$ across all levels, we consider three operations:
\begin{itemize}
    \item \emph{Zero}: set the activation score at position $(l, i)$ to zero.
    \item \emph{Replace}: substitute both the feature index and score at position $(l, i)$ with the target's values at the same position. This is valid because features at the same position belong to the same hierarchical level and thus the same index space.
    \item \emph{Rescale}: keep the source feature index but adopt the target's activation score.
\end{itemize}
After selecting the best edit, we decode via the standard DPB reconstruction path with norm preservation using the source input's norm, ensuring the output magnitude is consistent.

\textbf{For post-hoc SAEs (TopK, BatchTopK, JumpReLU, Matryoshka):} We encode the source activation into the $d_{\text{sae}}$-dimensional feature space. At each greedy step, for every feature index $j$ that is active in either the source or target encoding, we set the source's activation at index $j$ to the target's value and evaluate the resulting reconstruction distance. The edit that produces the smallest $\ell_2$ distance to the target activation is selected.

In both cases, the greedy search evaluates $O(k_{\text{active}})$ candidates per step, where $k_{\text{active}}$ is the number of active features (48 for all methods). The total cost is $O(k \cdot k_{\text{active}})$ SAE decode operations per prompt pair.

\subsection{Patching and Success Measurement}

After editing $k$ features and decoding back to activation space, we patch the edited activation into the model at the block output (residual stream) of the evaluation layer at the final token position. This replaces the residual stream value that would normally propagate to subsequent layers.

We measure success as a binary outcome: does the model now predict the target answer over the source answer? Specifically, for IOI we check logit(target IO name) $>$ logit(S name); for gendered pronouns we check logit(target pronoun) $>$ logit(source pronoun); for greater-than we check logit(target digit) $>$ logit(source digit).

\textbf{Pair filtering.} To ensure the success rate measures genuine prediction flips rather than noise from already-incorrect predictions, we filter to only include pairs where the source prompt correctly predicts the source answer before any intervention. This typically retains 90--98\% of pairs for IOI and gendered pronouns, and 57--77\% for greater-than (reflecting its lower baseline capability).

\subsection{Fairness Considerations}

To ensure a fair comparison between in-loop and post-hoc approaches:

\begin{itemize}
    \item \textbf{Layer selection}: Each model is evaluated at its own optimal layer as determined by causal mediation. The parity bottleneck is evaluated where its features participate in the circuit; post-hoc SAEs are evaluated where their base model concentrates the relevant computation.
    \item \textbf{Feature budget}: All methods use $k = 48$ active features (matching the parity bottleneck's total active feature count), and we evaluate at the same values of edits ($k = 1, 2, 4$).
    \item \textbf{Pair filtering}: The same filtered pair set is used for all methods on each task, ensuring identical evaluation conditions.
    \item \textbf{Multiple SAE architectures}: We evaluate four post-hoc SAE variants (TopK, BatchTopK, JumpReLU, Matryoshka BatchTopK) to demonstrate that the gap is not specific to one SAE training method but reflects a fundamental limitation of post-hoc approaches.
    \item \textbf{Per-prompt optimization}: The greedy search operates independently per prompt, avoiding assumptions about which features should universally correspond to task attributes. This does not disadvantage post-hoc methods that may use different features for the same concept across contexts.
\end{itemize}

\section{Steering}
\label{app:steering}

In initial experiments, we found that different steering strategies provided better results for dense+post-hoc SAE vs DPB features. In particular, for SAEs, single-layer steering provides a better tradeoff between fluency and concept activation, while for DPB, multi-layer steering works better. We thus compare the two using the best available method for each approach, multi-layer for DPB, and single-layer for SAE. For SAEs we steer at the middle layer of the model (layer 12).

Multi layer steering exploits the parity bottleneck's structural advantage: perturbations at each layer are denoised by subsequent layers' bottlenecks. This allows small, distributed nudges to compound coherently without accumulating off-manifold noise. The effective alpha range is much lower (2--10) compared to single-feature steering (20--75) because perturbations are applied at every layer simultaneously.

The same concept definitions, seed texts, LLM judge, and scoring rubric are used for both models. The scaling factor $\alpha$ is swept to trace the concept-fluency tradeoff, but differ between PT and baseline due to the aforementioned reasons. 

\subsection{Steering Mechanisms}

\textbf{Obtaining steering vector} 
Both models use the same contrastive procedure to obtain steering vectors:

\begin{enumerate}
    \item Collect positive texts (containing the target concept) and negative (contrastive) texts.
    \item Run the model, record feature activations on each set.
    \item Compute the difference in mean feature activations between positive and negative sets.
    \item Retain the top-$k$ features with the largest positive differential activation.
\end{enumerate}

\textbf{Steering vector application} For post-hoc SAEs, steering vectors $\mathbf{v}$ are top k weighted sums of the  the decoder column $W_{\text{dec}}[i]$ normalized to unit norm. For Parity Transformer with DPB, L0 features are standard basis one-hot vectors, while L1+ features are the parity-basis vector. For the baseline, steering vector $\mathbf{v}$ is added to the residual stream at the SAE's hook point during generation: $\mathbf{x} \leftarrow \mathbf{x} + \alpha \cdot \mathbf{v}$. For Parity Transformer, since the decoder matrix is not materialized, steering is done by editing the active features and/or their activations. 

\subsection{Concept Definition and Feature Selection}

For both models, we define 100 target concepts of simple semantic categories (e.g., ``religion,'' ``space,'' ``war'') generated by an GPT4o, rather than drawing from Neuronpedia concept databases like in \citet{luo2026learning}. This ensures neither model is advantaged by concept granularity mismatched to its capacity: at the 1.3B parameter scale, features tend to be broader than the fine-grained concepts in databases designed for larger models.

\subsection{Evaluation Protocol}

For each concept and steering strength $\alpha$, we generate text from 30 diverse seed prefixes drawn from AG News (covering business, technology, sports, and science domains). Each generated text is scored by an LLM judge (GPT-4o-mini) on two dimensions:

\begin{itemize}
    \item \textbf{Concept relevance} (0--2): To what extent did the generated text shift toward the target concept compared to unsteered generation from the same seed?
    \item \textbf{Fluency} (0--2): Is the text grammatically correct and naturally flowing?
\end{itemize}

Scores are averaged across seeds for each (concept, $\alpha$) pair, yielding a single (fluency, concept) point. Sweeping over $\alpha$ traces a Pareto frontier:

\textbf{Judge prompt for Concept score}
                                                                                                            
  Below is a BASELINE text (normal model output) and a STEERED text (output after concept injection). Both continue from the same prompt.                                                            
                                                                            
  Your job is to rate whether the STEERED text shows a coherent semantic shift that includes the target concept compared to the BASELINE. The steered text may also shift toward related or          
  co-occurring concepts — this is acceptable and should not be penalized as long as the target concept is present. Only penalize if the shift is toward concepts clearly antithetical to or unrelated
   to the target.                                                                                                                                                                              
  Rate the concept presence on a scale from 0 to 2:                                                                     
  - 0: the steered text shows NO shift toward the concept compared to baseline, or shifts in an antithetical direction
  - 0.5: very subtle hints or slight vocabulary shifts toward the concept                                                                                                               
  - 1: moderate presence of concept-related words, themes, or references (co-occurring related concepts are fine)                                                                 
  - 1.5: strong presence of the concept throughout the text                                                                                                                             
  - 2: the text is dominated by the concept  

\textbf{Judge prompt for fluency}

Please act as an impartial judge and evaluate the fluency of the sentence fragment provided below. Focus solely on fluency, disregarding its completeness, relevance, coherence with any broader context, or informativeness.                                                                                    
 Begin your evaluation by briefly describing the fluency of the sentence, noting any unnatural phrasing, awkward transitions, grammatical errors, or repetitive structures that may hinder      
 readability. After providing your explanation, rate the sentence's fluency on a scale from 0 to 2, where 0 indicates the sentence is not fluent and highly unnatural (e.g., incomprehensible or
 repetitive), 1 indicates it is somewhat fluent but contains noticeable errors or awkward phrasing, and 2 indicates the sentence is fluent and almost perfect.

\newpage
\section{Example steered completions}\label{app:steered completions}
\begin{figure}[h]
        \includegraphics[width=\textwidth]{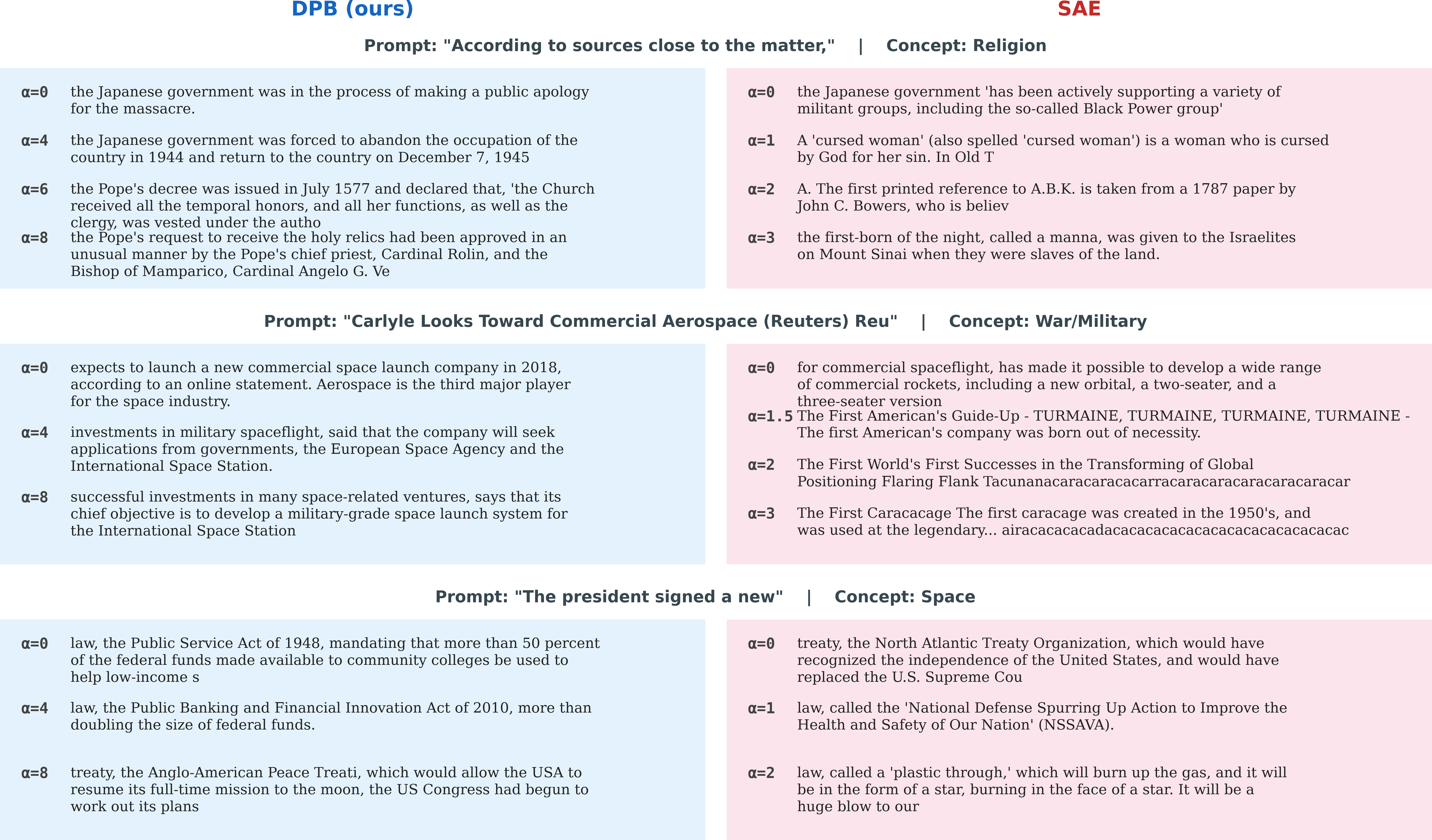}
        \caption{Steering examples}
        \label{fig:steering-examples}
\end{figure}

\section{Compute Requirements}
Training a single 1.3B ParityTransformer on 20B tokens takes around 160 B200 hours, for an approximate cost of \$640, while a 200m ParityTransformer takes around 32 B200 hours, for an approximate cost of \$128. We trained 2 variants each for each size for a total of \$1,536 to train the ParityTransformer models. Training the dense baselines took under \$300. Additionally, the data-poisoning experiments cost around 16 B200 hours, for a total of \$64. Training SAE baselines took around 64 B200 hours for a total of \$256. The total cost to reproduce all experiments is around  \$2,200. We estimate that we spent less than \$15,000 including preliminary experiments.

\newpage
\section{Model configuration reference}\label{app:DPB configs}
Training details for each model are given in \cref{tab:model-inventory}.

At the 1.3\,B (Large) scale we report two dense baseline models:
\textbf{GPT-Large} is used for the performance evaluations in
\S\ref{sec:variants};
\textbf{GPT-Large-cold} is the dense baseline reference for the data efficiency estimate \S\ref{sec:tax} and
interpretability experiments elsewhere in the paper.
The PT model evaluated in interpretability experiments
(\S\ref{sec:steering}, \S\ref{sec:saebench}, Appendix~\ref{app:fine grained CI})
is \textbf{PT-Large-2L-Aux}; \textbf{PT-Small-2L} is used for the
poisoned-document retrieval experiments (\S\ref{sec:poisoned-data}).
PT-Large-2L, PT-Small-3L, and the dense baselines appear only in the
performance evaluation (\S\ref{sec:variants}).

PT-Large-2L-Aux uses both the variance and reconstruction auxiliary losses, with weight $.05$ on variance and $.1$ on reconstruction. All other models use no auxiliary loss.

All PT models are trained trained for 20\,B tokens from FineWeb-Edu with sequence length 1024. The dense baselines GPT-Large and GPT-small are stopped early once reaching $1\%$ of the validation loss of PT-Large-2L and PT-Small-3L respectively, while using linear learning rate schedules with planned token budget and warmdown fraction given in \cref{tab:model-inventory} as ``$w_f$''.  Peak learning rate is shown
  as ``Muon\,/\,AdamW'' when the two are decoupled.

To provide as fair an estimate of data efficiency for the large dense model as possible, GPT-Large-cold was trained using a token budget chosen to achieve the target validation loss while exhausting the learning rate schedule, with the specific token budget of 3B tokens determined based on extrapolation from initial hyper-parameter sweeps.

\begin{table}[h]
  \caption{Canonical models used in the paper.}
  \label{tab:model-inventory}
  \centering
  \small
  \setlength{\tabcolsep}{4pt}
  \begin{tabular}{lrrrrrrlr}
    \toprule
    Label & Params & Layers & $d$ & Levels & Peak LR & $w_f$ & \makecell{Planned Budget \\ (billion tokens)} & 
    \makecell{Stopped Budget \\ (billion tokens)} \\
    \midrule
    \multicolumn{9}{l}{\emph{Large (1.3B)}} \\
    PT-Large-2L     & 1.31B & 24 & 2048 & 2   & 8e-3 / 3e-4 & 0.5 & 20 & 20 \\
    PT-Large-2L-Aux   & 1.31B & 24 & 2048 & 2   & 8e-3 / 3e-4 & 0.5 & 20 & 20  \\
    GPT-Large       & 1.31B & 24 & 2048 & --- & 8e-3 / 3e-4 & 0.5 & 26 & 6.82\\
    GPT-Large-cold  & 1.31B & 24 & 2048 & --- & 1e-2        & 0.5 & 3 & 3 \\
    \midrule
    \multicolumn{9}{l}{\emph{Small (203M)}} \\
    PT-Small-3L     & 203M  & 12 & 1024 & 3   & 1.2e-2      & 0.9 & 20 & 20 \\
    PT-Small-2L     & 203M  & 12 & 1024 & 2   & 1.2e-2      & 0.9 & 20 & 20 \\
    GPT-Small       & 205M  & 12 & 1024 & --- & 6e-3        & 0.4  & 10 & 2.13 \\
    \bottomrule
  \end{tabular}
\end{table}

ParityTransformer hierarchy configurations are detailed in \cref{tab:cayley-levels}. For a
  level $\ell$, $n_\ell$ is the bit-width of the feature index ($2^{n_\ell}$
  features at that level), $K_\ell$ is the number of features active per
  token, and $\Delta_\ell$ is the per-parent budget --- the number of
  children scored per active parent at the previous level. $\Delta_0$
  is undefined since L0 has no parents. The L0 ($\ell{=}0$) features are
  basis-aligned with the residual stream; L1+ features are the parity
  hashes described in \S\ref{sec:methods}.

\begin{table}[h]
  \caption{Per-level dictionary configuration for the
  ParityTransformer variants in Table~\ref{tab:model-inventory}. }
  \label{tab:cayley-levels}
  \centering
  \small
  \setlength{\tabcolsep}{5pt}
  \begin{tabular}{lcccc}
    \toprule
    Variant & Level $\ell$ & $n_\ell$ (features $=2^{n_\ell}$) & $K_\ell$ & $\Delta_\ell$ \\
    \midrule
    \multirow{2}{*}{\shortstack[l]{2L variants: PT-Large-2L, PT-Large-2L-Aux, PT-Small-2L}}
        & 0 & $\log_2 d$ ($d{=}1024$ or $2048$) & 16 & --- \\
        & 1 & 15 (32{,}768)                    & 32 & 256 \\
    \midrule
    \multirow{3}{*}{\shortstack[l]{3L variant: PT-Small-3L}}
        & 0 & 10 (1024)     & 16 & --- \\
        & 1 & 15 (32{,}768) & 32 & 256 \\
        & 2 & 17 (131{,}072)& 64 & 256 \\
    \bottomrule
  \end{tabular}
\end{table}


\end{document}